\documentclass[a4paper,UKenglish,cleveref, autoref, thm-restate]{lipics-v2021}
\usepackage{makecell}
\usepackage{multirow}
\usepackage[ruled,vlined]{algorithm2e}

\usepackage{amsthm}
\usepackage{amsmath}
\usepackage{aligned-overset}
\usepackage{graphicx}
\usepackage{emptypage}
\usepackage{xcolor}

\usepackage{mathtools}

\usepackage{ bbold }

\crefname{ALG@line}{line}{lines}
\Crefname{ALG@line}{Line}{Lines}

\usepackage{tcolorbox}
\usepackage{tikz}
\usetikzlibrary{arrows.meta,shapes.geometric,positioning}

\renewcommand{\inf}{\mathop{\mathrm{inf}\vphantom{\mathrm{sup}}}}
\newcommand{\NN}{\ensuremath{\mathbb{N}}\xspace}

\newcommand{\EE}{\ensuremath{\mathbb{E}}\xspace}
\newcommand{\PP}{\ensuremath{\mathbb{P}}\xspace}
\newcommand{\GG}{\ensuremath{\mathbb{G}}\xspace}
\newcommand{\MM}{\ensuremath{\mathbb{M}}\xspace}

\newcommand{\A}{\mathcal{A}\xspace}

\newcommand{\T}{\mathcal{T}\xspace}

\renewcommand{\S}{\mathcal{S}\xspace}
\newcommand{\trans[3]}{#1\xrightarrow[]{#2}#3}

\newcommand{\printValueFunction}[1]{\mathsf{#1}}
\newcommand{\Val}{\printValueFunction{Val}}

\newcommand{\Reach}{\printValueFunction{Reach}}
\newcommand{\safereach}{\printValueFunction{NotUntil}}

\newcommand{\Prob}{\delta\xspace}
\newcommand{\E}{\mathbb{E}\xspace}

\newcommand{\defas}{\coloneqq}

\newcommand{\maxp}{\mathrm{Max}}
\newcommand{\minp}{\mathrm{Min}}
\newcommand{\eps}{\varepsilon}

\newcommand{\BR}{\textnormal{BR}}

\newcommand{\ALG}{\textnormal{LeTuReGa}}

\title{PAC Learning in Turn-Based Stochastic Games with Reachability Objectives: A Decentralized Private Approach via Expected Conditional Distance} % Title of the assignment

\titlerunning{PAC Learning in Turn-Based Stochastic Games with Reachability Objectives} %TODO optional, please use if title is longer than one line

\author{Ali Asadi}{Institute of Science and Technology Austria (ISTA), Austria}{ali.asadi@ista.ac.at}{https://orcid.org/0009-0005-2839-953X}{}

\author{Krishnendu Chatterjee}{Institute of Science and Technology Austria (ISTA), Austria}{krishnendu.chatterjee@ista.ac.at}{https://orcid.org/0000-0002-4561-241X}{}

\author{Pavol Kebis}{Institute of Science and Technology Austria (ISTA), Austria}{pavol.kebis@ista.ac.at}{https://orcid.org/0000-0003-0561-1364}{}

\authorrunning{A. Asadi, K. Chatterjee, P. Kebis} %TODO mandatory. First: Use abbreviated first/middle names. Second (only in severe cases): Use first author plus 'et al.'

\Copyright{CC-BY} %TODO mandatory, please use full first names. LIPIcs license is "CC-BY";  http://creativecommons.org/licenses/by/3.0/

\ccsdesc[500]{Theory of computation~Logic and verification}

\keywords{formal methods, games and logic, logical aspects of AI, model checking} %TODO mandatory; please add comma-separated list of keywords

\category{} %optional, e.g. invited paper

\relatedversion{} %optional, e.g. full version hosted on arXiv, HAL, or other respository/website
\funding{The research was partially supported 
by Austrian Science Fund (FWF) 10.55776/COE12, ERC CoG 863818 (ForM-SMArt), FWF-2022-SFB F8502 (SPyCoDe), and ERC-2020-AdG 101020093 (VAMOS) grants.} %optional, to capture a funding statement, which applies to all authors. Please enter author specific funding statements as fifth argument of the \author macro.

\nolinenumbers %uncomment to disable line numbering

\EventEditors{Ana Sokolova and Patrick Totzke}
\EventNoEds{2}
\EventLongTitle{37th International Conference on Concurrency Theory (CONCUR 2026)}
\EventShortTitle{CONCUR 2026}
\EventAcronym{CONCUR}
\EventYear{2026}
\EventDate{September 1--4, 2026}
\EventLocation{Liverpool, UK}
\EventLogo{}
\SeriesVolume{391}
\ArticleNo{3}

\begin{document}

\maketitle

%TODO mandatory: add short abstract of the document
\begin{abstract}
Reachability is the most fundamental logical objective, yet it is notoriously difficult to learn in reinforcement learning settings: even for Markov decision processes, PAC learning of reachability is impossible without additional assumptions. This difficulty also holds in turn-based stochastic games (TBSGs), where two adversarial players interact on a finite state space. In this work, we consider turn-based stochastic games with reachability objectives. For such settings, adversarial learning, in which players are adversarial even in the learning phase, is impossible. Therefore, the goal is to consider learning, in which both players learn the unknown model together. In this spirit, previous literature on PAC learning in TBSGs considers (a)~public information shared by both players; and (b)~centralized learning, which means that players share the same learning algorithm. In this work, our contribution is two-fold. First, we relax these strong assumptions and ensure learning: (i)~with private information not shared with the other player; and (ii)~decentralized learning where the players do not share the same learning algorithm. To the best of our knowledge, this work is the first positive result for decentralized and private information learning of TBSGs with reachability objectives. Second, we introduce a game-theoretic generalization of the Expected Conditional Distance (ECD) parameter, which measures the expected length of reaching the target set. We establish a polynomial-sample complexity bound with respect to the number of states, actions, ECD parameter, and inverses of error tolerance and failure probability.
\end{abstract}

\section{Introduction}
\textbf{Turn-Based Stochastic Games.} Turn-based stochastic games (TBSGs)~\cite{DBLP:journals/iandc/Condon92} are zero-sum turn-based games played over a finite state space by two adversarial players, $\maxp$ and $\minp$, along with randomness in the transition function. The state space is partitioned into two disjoint sets for $\maxp$ and $\minp$. At each time step, the player owning the current state chooses an action. The subsequent state is then determined by a probabilistic transition function. 
This model generalizes several classical formalisms such as Markov decision processes (MDP)~\cite{DBLP:books/wi/Puterman94}, which have only one player and stochastic uncertainty, and graph games~\cite{DBLP:journals/jacm/ChandraKS81,DBLP:conf/stoc/GurevichH82}, where the transition function collapses to Dirac distributions. \\
\textbf{Objectives.} In TBSGs, the interaction of players is guided by an \emph{objective function}, which formally captures the desired behaviour of the model. Objectives are typically categorized into: (a)~logical objectives, e.g., reachability, safety, and parity; and (b)~quantitative objectives, e.g., finite-horizon, discounted sum, and mean payoff. This work focuses on reachability objectives, which are the most fundamental logical objectives, i.e., given a set of target states, the objective requires that some target state is eventually visited.  It is important to distinguish reachability from discounted sum or finite-horizon objectives. Discounted sum objectives introduce a discount factor $\lambda < 1$, which effectively imposes a "soft" horizon. Finite-horizon objectives strictly bound the interaction to $L$ steps. In contrast, reachability is an unbounded property; a target might be reached after an arbitrarily large number of steps.\\
\textbf{Strategies and Values.} Strategies are recipes that define the choice of actions of the players. They are functions that, given a game history, return a distribution over actions. Given a TBSG and an objective, the value of player $\maxp$ at a state is the maximal expectation that the player can guarantee for the objective against all strategies of player $\minp$. A strategy is $\eps$-optimal if it guarantees the value up to additive error $\eps$.\\
\textbf{PAC Learning.} While classical model checking assumes a known model, in the reinforcement learning setting the model is unknown. The players must learn near-optimal strategies solely through interaction with a simulator. In this setting, the gold standard is Probably Approximately Correct (PAC) guarantees for learning near-optimal strategies~\cite{DBLP:journals/cacm/Valiant84}. The PAC-RL problem is defined as follows.
\begin{tcolorbox}
Can we design learning algorithms for both players such that for any error tolerance $\epsilon > 0$ and failure probability $p \in (0,1)$, the algorithms output strategies that are $\epsilon$-optimal with probability at least $1-p$? 
\end{tcolorbox}
Crucially, for the problem to be considered tractable, the number of samples required by the algorithm (called \emph{sample complexity}) must be polynomial in number of states and actions, inverse error tolerance $1/\epsilon$ and inverse failure probability $1/p$.\\
% \textcolor{red}{
% \subparagraph{General Impossibility.} \sout{For general TBSGs (and even MDPs), the learning landscape is constrained by severe hardness results. Specifically, PAC-RL for reachability objectives is known to be impossible without further assumptions~\cite{DBLP:conf/birthday/AlurBBJ22,DBLP:journals/corr/abs-2111-12679}. The difficulty arises from the sensitivity of the reachability value to arbitrarily small transition probabilities. A transition with probability $q \approx 0$ might be the unique path to the target, and distinguishing this $q$ from $0$ requires $\Omega(1/q)$ samples. Since $q$ can be infinitesimally small, the sample complexity cannot be bounded by any function in game parameters, the error tolerance, and failure probability. This stands in sharp contrast to discounted objectives, where the effective horizon is bounded by $1/(1-\lambda)$, or finite-horizon objectives, where the depth is fixed at $L$.
% }}
\textbf{Expected Conditional Distance.} Even in MDPs, the PAC-RL problem for reachability objectives is impossible in general~\cite{DBLP:conf/birthday/AlurBBJ22,DBLP:journals/corr/abs-2111-12679}. Thus, to circumvent this impossibility the literature considers further assumptions including prior knowledge on (a)~the topology of the underlying graph~\cite{DBLP:conf/rss/FuT14}; (b)~the minimum non-zero probability~\cite{DBLP:conf/cav/AshokKW19}; and (c) a parameter called the Expected Conditional Distance (ECD), which was introduced in~\cite{DBLP:conf/icml/SvobodaBC24} for MDPs. The ECD parameter provides a measure of the expected number of steps to reach the target. We generalize ECD to the TBSG setting. This generalization is quite subtle, as several natural generalizations of ECD to games fail to achieve PAC guarantees.  
%\textcolor{red}{\sout{For each strategy $\pi_{\minp}$ for the player $\minp$, ECD considers the strategies of player $\maxp$ that are best-responses for the reachability objective, i.e., they maximize the probability of reaching the target set, and among those ECD takes the one with the shortest expected reaching time conditioned on eventually reaching $T$. Then ECD takes the supremum over $\pi_{\minp}$, meaning the player $\minp$ can pick a strategy that makes even the shortest reachability-optimal $\maxp$ response take as long as possible (in conditional expectation) to reach $T$.}} 
Intuitively, if a game has a small ECD, it implies that if the target is reachable, it is reachable relatively quickly on average. This assumption excludes pathological games where the only optimal strategies involve waiting for exponentially many steps. Bounding the ECD allows us to truncate the infinite-horizon, converting the intractable reachability problem into a tractable finite-horizon approximation.\\
\textbf{Tractable Private and Decentralized Learning.} Adversarial learning, in which players are adversarial even in the learning phase, is impossible for TBSGs with reachability objectives: consider a game with an initial player-$\minp$ state and an additional action that immediately leads to the target. In the learning phase, $\minp$ chooses the trivial target-reaching action, which is never part of the optimal strategy, rendering learning useless. Since adversarial learning with PAC guarantees is impossible, the goal is to consider learning where both players learn the unknown model together. In this setting, \cite{DBLP:conf/cav/AshokKW19} established an anytime algorithm with the prior knowledge on (a)~the minimum non-zero transition probability; or (b)~the topology of the underlying graph. However, this prior work \cite{DBLP:conf/cav/AshokKW19} has two important limitations: First, it assumes (i)~public
information shared by both players; and (ii)~centralized learning where players share the same learning algorithm. Second, while the algorithm is anytime, it does not provide sample-complexity bounds for the PAC-RL problem.\\
\textbf{Motivation.} 
The motivation of this work is two-fold. The main motivation is to relax the above two strong assumption and ensure learning: (i)~with private information not shared with the other player; and (ii)~decentralized learning where the players do not share the same learning algorithm. Second, even in previous setting of centralized learning with public information, sample complexity bound was not established. The goal is to establish a polynomial-sample complexity bound with respect to the number of states, actions, ECD parameter, and inverses of error tolerance and failure probability.\\
\textbf{Our Contributions.} We address the above gaps by considering the decentralized private information setting of TBSGs with reachability objectives. We present a pair of algorithms for both players which are PAC-RL learnable with prior knowledge on the ECD parameter. The sample complexity of these algorithms is polynomial in the game parameters, the inverses of the error tolerance and failure probability, and the ECD parameter. To the best of our knowledge, this pair of algorithms is the first positive result for decentralized private information PAC-RL learning  in TBSGs with reachability objectives.\\
\textbf{Technical Contributions.} Our technical contributions are as follows. 
\begin{itemize}
    \item We generalize the Expected Conditional Distance (ECD) parameter to the TBSG setting. %We define $ECD_{\GG}^{\epsilon}$ as the best-achievable expected time to reach the target set under all $\epsilon$-optimal strategies for player $\maxp$, conditioned on the target actually being reached.
    \item We provide a reduction showing that, if the ECD of a game is small, we can approximate the reachability value using a finite-horizon reachability objective.
    \item We define a finite-horizon expanded game over state-step pairs that unfolds the horizon into the state space, enabling backward induction and local learning at each state-step.
    \item We present a learning procedure where both players use backward induction to learn local $\epsilon$-optimal actions one step at a time. The algorithm uses a best-arm identification routine at each state-step to identify $\epsilon$-optimal actions with high confidence.
    \item The algorithm iteratively constructs a set of strategies in stages. Each newly constructed strategy is added to the set used in subsequent stages to ensure that previously discovered state-steps of the game remain reachable while the players explore new state-steps.
    \item To explore new state-steps, we maintain a set of unexplored ones. These are treated as auxiliary target sets, incentivising the players to visit more of the state-step space.
    %\item Despite the decoupled and private nature of the learning environment where players do not share states or actions, we demonstrate that the players can remain synchronized throughout the stages. \todo Pavol. are these contributions?\todo
\end{itemize}
Proofs omitted due to space restrictions are provided in the Appendix.\\
\textbf{Technical Novelty.} The technical novelty of this work is two-fold. The first novelty is the appropriate definition of ECD for games. Second, the previous works rely on estimating the underlying probabilistic transitions, which is infeasible in private decentralized learning. Our approach carefully combines different techniques: best arm identification; tracking strategies for exploration; and backward induction to directly compute near-optimal strategies.\\
\textbf{Related Works.}
The intersection of formal verification and reinforcement learning has recently received significant attention. We summarize some related works as follows.
\begin{itemize}
    \item \emph{MDPs.} PAC guarantees for complex logical objectives, such as Linear Temporal Logic (LTL)~\cite{DBLP:conf/focs/Pnueli77}, has seen significant development but remains constrained by specific environmental assumptions, due to the inherent impossibility of learnability in general settings without additional assumptions~\cite{DBLP:conf/birthday/AlurBBJ22,DBLP:journals/corr/abs-2111-12679}. Early PAC learning algorithm presented in~\cite{DBLP:conf/rss/FuT14} required complete knowledge of the environment's topology. \cite{DBLP:conf/cav/AshokKW19} improved upon this by requiring only a lower bound on the minimum non-zero transition probability. More recently, \cite{DBLP:conf/aaai/PerezS024} has established PAC results using the mixing time of the environment. \cite{DBLP:conf/icml/SvobodaBC24} has introduced the ECD parameter and established PAC results relying on this parameter. 
    \item \emph{TBSGs.} Early works on learning in TBSGs focused mainly on quantitative objectives and did not provide PAC guarantees~\cite{DBLP:journals/mor/LakshmivarahanN81,DBLP:conf/icml/Littman94,DBLP:conf/ijcai/BrafmanT99}. For logical objectives, \cite{DBLP:conf/ijcai/WenT16} has established PAC learning algorithms for TBSGs with LTL objectives by combining the special case of almost-sure satisfaction of a specification with optimizing quantitative objectives. \cite{DBLP:conf/cav/AshokKW19} obtained PAC guarantees for reachability objectives by computing under- and over-approximation of values, originally introduced in~\cite{DBLP:conf/cav/KelmendiKKW18}. It is noteworthy that all these works consider the centralized public information setting.
\end{itemize}

\section{Preliminaries}
In this section, we define the notations of turn-based stochastic games and PAC learning.\\
\textbf{Notation.} For a positive integer $n$, the set $\{0, 1, 2, \ldots, n\}$ is denoted by $[n]$. Open and closed intervals of reals are denoted $(x, y) = \{a \in \mathbb{R}~|~ x < a < y\}$ and $[x, y] = \{a \in \mathbb{R} ~|~ x \leq a \leq y\}$, respectively. Sets are denoted by calligraphic letters, e.g., $\S, \A$. Elements of sets are denoted by lowercase letters, e.g., $s, a$.
%Random elements with values in these sets are denoted by uppercase letters, e.g., $S, A$ \todo this does not hold.
The set of probability distributions over a set $\S$ is denoted by $\Delta(\S)$. The set of natural numbers is $\mathbb{N} = \{0, 1, 2, \ldots\}$.

\subsection{Turn-Based Stochastic Games}
\begin{definition}[Turn-Based Stochastic Games]
   A \emph{turn-based stochastic game} (TBSG for short) is a tuple $\GG = (\S, \A, \Prob, \mu)$
    where
    \begin{itemize} 
        \item $\S = \S_\maxp \uplus \S_\minp$ is a finite set of states, partitioned into the set of player-$\maxp$ states $\S_\maxp$ and the set of player-$\minp$ states $\S_\minp$;
        \item $\A$ is a finite set of actions; 
        \item $\Prob \colon \S \times \A \to \Delta(\S)$ is a probabilistic transition function which, given a state and an action, assigns a probability distribution over the successor state; and
        \item $\mu \in \Delta(S)$ is a probability distribution over the initial state.
    \end{itemize}
\end{definition}
\textbf{Dynamic.} At the beginning, an initial state $s_0 \sim \mu$ is drawn, and the game proceeds as follows. In each step $\ell \in \mathbb{N}$, the owner of the state $s_\ell$ selects an action $a_\ell \in \A$, possibly at random, and the successor state $s_{\ell+1} \sim \Prob(s_\ell, a_\ell)$ is drawn.\\
\textbf{Histories and Plays.} A history is a finite sequence $h = (s_0, a_0, s_1, a_1, \ldots, s_L)$ of states and actions such that for all $\ell \in [L-1]$, we have $\Prob(s_\ell, a_\ell)(s_{\ell+1}) > 0$. A play is an infinite sequence of states and actions $\omega = (s_0, a_0, s_1, a_1, \ldots)$ such that, for all $\ell \in \mathbb{N}$, we have $\Prob(s_\ell, a_\ell)(s_{\ell+1}) > 0$. The set of all plays is denoted by $\Omega$.\\
\textbf{Strategies.} A strategy determines how a player chooses an action based on the history up to a given step. Formally, a strategy for a player $i \in \{\maxp, \minp\}$ is a function $\pi_i \colon (\S \times \A)^\star \times \S_i \to \Delta(\A)$. The set of all strategies is denoted by $\Pi_i$. A strategy is Markovian if it depends on the current state and current step of the play, i.e., $\pi_i \colon S_i \times \NN \to \Delta(\A)$. A strategy is pure if it prescribes deterministic actions, i.e., it corresponds to a function $\pi_i \colon (\S \times \A)^\star \times \S_i \to \A$. A strategy is memoryless if it decides only based on the current state, i.e., $\pi_i \colon \S_i \to \Delta(\A)$. A strategy is positional if it is pure and memoryless. Note that in TBSGs with reachability objectives positional strategies are as powerful as general strategies~\cite{DBLP:journals/iandc/Condon92}.  Given strategies for both players $\pi_\maxp$ and $\pi_\minp$, we denote the strategy profile by $(\pi_\maxp, \pi_\minp)$, and if the context is clear, we simply use $\pi$.\\
\textbf{Probability Measures.} For a history $h$, its cone is the set of plays where $h$ is their prefix. Given a strategy profile $\pi$ and an initial belief $\mu$, the unique probability measure over Borel sets of infinite plays is denoted by $\PP^\pi_{\mu}( \cdot )$, which is defined by Carathéodory's extension theorem by extending the natural definition over cones of plays~\cite{billingsley2012ProbabilityMeasurea}.\\
\textbf{Reachability Objectives.} An \emph{objective} in a TBSG is a Borel set of plays 
$\Phi \subseteq \Omega$ in the Cantor topology on~$\Omega$~\cite{kechrisClassicalDescriptiveSet1995}. 
In this work, we consider reachability objectives which lie in the first level of the Borel hierarchy.
Given a set of target states $\T$, the reachability objective requires that a target state 
is eventually visited, i.e., $\Reach(\T) \defas \{ \omega \in \Omega \colon \exists \ell \in \mathbb{N} \quad s_\ell \in \T \}$. 
The goal of player $\maxp$ is to maximize the probability of satisfying the objective, while the goal of player $\minp$ is to minimize it.

We now recall a fundamental determinacy for TBSGs with reachability objectives. 
\begin{theorem}[Determinacy~\protect{\cite{DBLP:journals/iandc/Condon92}}]
\label{thm:determinacy-in-tbsg}
For all TBSGs with a target set $\T \subseteq \S$, we have
\begin{align*}
    \sup_{\pi_\maxp \in \Pi_\maxp} \inf_{\pi_\minp \in \Pi_\minp} \PP_\mu^{\pi} (\omega \in \Reach(\T)) = \inf_{\pi_\minp \in \Pi_\minp} \sup_{\pi_\maxp \in \Pi_\maxp} \PP_\mu^{\pi} (\omega \in \Reach(\T)) \,.
\end{align*}
\end{theorem}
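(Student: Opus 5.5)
The trivial inequality $\sup_{\pi_\maxp}\inf_{\pi_\minp} \le \inf_{\pi_\minp}\sup_{\pi_\maxp}$ holds for any two-player zero-sum payoff, so the work is the reverse inequality. I would prove it through a fixed point of the one-step Bellman operator. Make the target absorbing; this does not change $\Reach(\T)$. For $v \in [0,1]^{\S}$ define $F(v)(s) = 1$ for $s \in \T$. For $s \in \S_\maxp \setminus \T$ set $F(v)(s) = \max_{a} \sum_{s'} \Prob(s,a)(s') v(s')$, and for $s \in \S_\minp \setminus \T$ take the corresponding $\min_a$. The operator $F$ is monotone on the complete lattice $[0,1]^{\S}$, so by Knaster--Tarski it has a least fixed point $v^*$. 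This $v^*$ is the limit of the iterates $F^n(\mathbf{0})$, which are the values of the $n$-step bounded reachability games; these values are obtained by backward induction, where determinacy is immediate because the game is finite and turn-based.

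\textbf{Step 1 (Min guarantees $v^*$).} Let $\pi_\minp$ be positional, picking at each Min state an action attaining the $\min$ in $F(v^*)$. Fix any $\pi_\maxp$. Because $v^*$ is a fixed point, $v^*(s_\ell)$ is a supermartingale along the play, and it equals $1$ on $\T$. This alone does not bound the reach probability, since $v^*$ could be positive on states that never reach the target. Instead I would use the bounded games. Under $\pi_\minp$, the probability of reaching $\T$ within $n$ steps is at most the $n$-step Max-value against this Min strategy. I would then argue that this $n$-step Max-value is at most $v^*$, by induction using that $\pi_\minp$ is optimal for $v^*$ and $F^n(\mathbf 0)\le v^*$. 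Letting $n\to\infty$ gives $\PP^\pi_\mu(\Reach(\T)) \le \mu\cdot v^*$, hence $\inf\sup \le \mu\cdot v^*$.

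\textbf{Step 2 (Max guarantees $v^*$).} This is the main obstacle: a greedy Max choice with respect to $v^*$ may loop forever among states of value $v^*$ without reaching $\T$. I would avoid this by not using a positional strategy at all. For $\eps>0$, pick $n$ with $\mu\cdot F^n(\mathbf 0) \ge \mu\cdot v^* - \eps$. Let Max play the optimal $n$-step backward-induction strategy, a Markovian strategy indexed by step, and then play arbitrarily afterwards. Against any $\pi_\minp$, this strategy reaches $\T$ within $n$ steps with probability at least $\mu\cdot F^n(\mathbf 0)$. Hence $\sup\inf \ge \mu\cdot v^* - \eps$ for every $\eps$.

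Combining the two steps gives $\inf\sup \le \mu\cdot v^* \le \sup\inf$, which together with the trivial inequality yields equality. If one also wants positional optimal strategies (as cited from~\cite{DBLP:journals/iandc/Condon92}), an additional attractor argument inside the states of maximal value is needed; it is not required for the equality itself.
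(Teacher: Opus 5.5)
Your argument is correct, but it is not the route behind the statement. The paper gives no proof of this theorem and defers to Condon. The classical argument behind that citation passes to \emph{stopping} games and uses Shapley's theorem: once every play halts with probability one, the Bellman operator has a unique fixed point, and greedy positional strategies are optimal. This yields positional optimal strategies for both players.

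You instead stay in the non-stopping game. You take the least fixed point $v^*$ of the Bellman operator and identify it with the limit of the finite-horizon values $F^n(\mathbf{0})$. You let $\minp$ play greedily with respect to $v^*$, which is optimal and positional, and you give $\maxp$ only $\eps$-optimal Markovian strategies from backward induction on a long enough horizon. This is more elementary and self-contained: there is no perturbation to a stopping game, no uniqueness-of-fixed-point argument, and no bound on the bit-size of values. It handles history-dependent randomized strategies directly, and it matches how the paper itself later argues via finite horizons and backward induction.

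What your route does not deliver is a positional optimal strategy for $\maxp$ in the original game, which the paper also attributes to Condon. For the paper's purposes this is harmless. In the expanded game the step is part of the state, so backward induction already produces positional optimal strategies there.

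A few small corrections:
\begin{itemize}
\item Knaster--Tarski gives only the existence of $v^*$. The identity $v^* = \lim_n F^n(\mathbf{0})$ additionally needs that $F$ commutes with limits of increasing sequences, which holds because the action set is finite.
\item Your indexing is off by one: $F^1(\mathbf{0})$ is the indicator of $\T$, i.e.\ the $0$-step value. This is immaterial.
\item In Step~1 the supermartingale does suffice on its own. Since $v^*$ dominates the indicator of $\T$ and $\T$ is absorbing, $\mathbb{P}(s_n \in \T) \le \mathbb{E}[v^*(s_n)] \le \mu\cdot v^*$. The worry that $v^*$ may be positive on states that never reach $\T$ only threatens the lower bound, which is exactly what your Step~2 handles.
\item The induction you sketch in Step~1 really uses that $v^*$ is a fixed point of the operator with $\minp$'s action frozen to the greedy choice, not the bound $F^n(\mathbf{0}) \le v^*$.
\end{itemize}
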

\textbf{Values.} \Cref{thm:determinacy-in-tbsg} implies that switching the quantifiers 
does not make a difference and leads to a unique notion of value. Formally, given a target set $\T$,
the value is a function of initial distribution
\begin{align*}
\Val_{R(\T)}^{\GG}(\mu) \defas \sup_{\pi_\maxp \in \Pi_\maxp} \inf_{\pi_\minp \in \Pi_\minp} \PP_\mu^{\pi} (\omega \in \Reach(\T)).
\end{align*}
We omit writing $\GG$ when clear from the context.\\
\textbf{Approximately Optimal Strategies.}
Given $\eps \ge 0$, a strategy $\pi_\maxp$ for player $\maxp$ is $\eps$-optimal 
if it guarantees the value up to an additive error $\eps$, i.e., 
if $\inf_{\pi_\minp \in \Pi_\minp} \PP_\mu^{\pi}(\omega \in \Reach(\T)) \ge \Val_{R(\T)}(\mu) - \eps$.
We denote the set of $\eps$-optimal strategies by $\Pi_{\maxp}^{\eps}$. 
In particular, we call a $0$-optimal strategy simply optimal. The definition of $\eps$-optimal strategies for player~$\minp$ is analogous.\\
\textbf{Best-responses.} For a player-$\minp$ strategy $\pi_\minp$, we define the set of best-responses for player $\maxp$ as 
\begin{align*}
    \BR(\pi_\minp) \defas \Big\{ \pi_\maxp \in \Pi_\maxp \colon &\PP_\mu^{(\pi_\maxp, \pi_\minp)} \left ( \omega \in \Reach(\T) \right ) =\\
    &\sup_{\pi'_\maxp \in \Pi_\maxp } \PP_\mu^{(\pi'_\maxp, \pi_\minp)} \left ( \omega \in \Reach(\T) \right ) \Big \}\,.
\end{align*}
The set of best-responses for player $\minp$ is defined analogously.

\subsection{Reinforcement Learning for TBSGs}
In the reinforcement learning setting for TBSGs with reachability objectives, we consider a scenario 
where the players have no information about the transition probabilities $\Prob$ or the initial 
distribution~$\mu$; only the parameters $\S$ and $\A$ are known to both players, and the players access the TBSG only through a simulator $\MM$. 
The goal of both players is to use 
\emph{learning algorithms} to find a near-optimal strategy profile.
In this work, the learning algorithms are \emph{decoupled}, i.e.,
each player has its own learning algorithm that does not communicate with the other player's algorithm.
%\textcolor{red}{\sout{We consider a \emph{cooperative} setting, which means that we can choose the algorithms for both players and design them so that they identify near-optimal strategies in a coordinated manner.}}
The assumption of \emph{private states} is another difference that distinguishes our setting from previously considered settings on TBSGs. 
We assume that the current state of a play is announced only to its owner and not to the other player. 
In the rest,
we formalize the notion of simulators, learning algorithms, and PAC-RL in this setting.

\begin{definition}[Simulators]\label{def:simulator}
Given a TBSG $\GG = (\S, \A, \Prob, \mu)$ with a target set $\T \subseteq \S$, a simulator $\MM$ stores the current state of the game, receives inputs from both players, performs actions, and outputs to players the new state to which the play is proceeded. Precisely, it works as follows:
\begin{enumerate}
    \item Any player $i$ can propose to terminate the simulator with a strategy $\pi_i$ by calling the procedure $\MM.propose(\pi_i)$. The simulator terminates with a strategy profile $\pi$ only if both players propose.
    \item $\MM$ informs both players that a new play has started;
    \item $\MM$ samples the initial state $s \sim \mu$;
    \item $\MM$ repeats the following:
    \begin{enumerate}
        \item the active player $i$ is $\maxp$ if $s \in \S_\maxp$ or $\minp$ if $s \in \S_\minp$;
        \item if $s \in \T$, both players are informed that the target was reached and the simulator returns to step 1;
        \item both players are informed who the active player is but \textbf{only the active player has access to the current state $s$};
        \item active player $i$ either (I)~chooses an action $a \in \A$ by calling the procedure $\MM.step(a)$; or (II) resets the game by calling the procedure $\MM.reset()$. Note that \textbf{$a$ is announced only to the simulator and not the other player}. The play proceeds with transitioning to a new state $s' \sim \Prob(s, a)$ and returning to step 4.
    \end{enumerate}
\end{enumerate}
\end{definition}

\begin{definition}[Learning Algorithms]
    A learning algorithm $\mathfrak{A}_i$ for a player $i \in \{\maxp, \minp\}$ is an algorithm 
    that interacts with the simulator $\MM$ by calling the procedures $\MM.step(a), \MM.reset(),$ and $\MM.propose(\pi_i)$ where $a \in \A,$ and $\pi_i$ is a player-$i$ strategy.
    %A learning algorithm induces a random sequence of strategies $\{\pi_i^t\}_{t \in \NN}$, where $\pi_i^t$ is the output strategy after $t$ iterations.
    A learning algorithm is 
    \emph{decoupled} if it does not communicate with the other player's learning algorithm.
\end{definition}

% \begin{remark}
%     If $\MM$ informs both players about the current state $s$, then the whole process of learning can be 
%     replaced by a single algorithm with full information.
% \end{remark}

\begin{definition}[PAC-RL]
    A pair of learning algorithms $(\mathfrak{A}_\maxp, \mathfrak{A}_\minp)$ is PAC-RL for reachability objectives if there
    exists a function $f$ such that
    for all $\eps, p \in (0, 1)$ and all TBSGs $\GG = (\S, \A, \Prob, \mu)$ with a target set $\T$, taking $N = f(|\S|, |\A|, \frac{1}{p}, \frac{1}{\eps})$, with probability at least $1 - p$, the simulator terminates with a strategy profile $(\pi_\maxp, \pi_\minp)$ after at most $N$ procedure calls where both strategies are $\eps$-optimal.
\end{definition}
\textbf{Sample Complexity.} The function $f$ in the definition of PAC-RL is called 
the sample complexity of the learning algorithms. If $f$ is a polynomial function, 
then we say that the learning algorithms have polynomial sample complexity.\\
%\textcolor{blue}{It is standard to take inverses of $p$ and $\varepsilon$ in order to avoid exponential dependence on the bit representation.}
\textbf{General Hardness.}
It is known that, even for MDPs with reachability objectives, there is no algorithm that is PAC-RL in general~\cite{DBLP:conf/birthday/AlurBBJ22,DBLP:journals/corr/abs-2111-12679}, meaning that there is no function $f$ that satisfies the condition of PAC-RL.
%\textcolor{red}{\sout{The impossibility result is due to the fact that in MDPs with reachability objectives, small perturbations in transition probabilities can cause large changes in the value function.}} 
In order to circumvent this hardness, we consider a parameter called \emph{Expected Conditional Distance} (ECD).

\section{Expected Conditional Distance}
\label{sec:ecmd}
In this section, we introduce a parameter for TBSGs called the \emph{Expected Conditional Distance} (ECD).
%We later use this parameter to reduce the PAC-RL problem for reachability objectives to the PAC-RL problem for finite-horizon reachability objectives.
The ECD parameter was previously studied for MDPs with reachability objectives~\cite{DBLP:conf/icml/SvobodaBC24}.
The main goal of this parameter is to reduce the PAC-RL for reachability to PAC-RL for finite-horizon reachability.
The generalization of this parameter to TBSGs is quite subtle, since we show below that several natural generalizations do not yield a suitable bound on the horizon.
%We present these generalizations and explain why they fail.
We then provide an appropriate generalization to TBSGs and give a reduction from PAC-RL with the ECD parameter to PAC-RL for finite-horizon reachability objectives.
Finally, we discuss several key aspects of our parameter which justify the usefulness: (a) its important properties that make it useful for PAC learning; (b) how it can be bounded using other classical parameters from the literature; and (c) how it compares with the well-studied stochastic shortest path parameter.
%We present several natural alternative generalizations of ECD to games that fail to provide an appropriate bound on the horizon of the game.

\begin{definition}[Alternative Generalizations]
\label{def:alt-gen}
Given a TBSG $\GG = (\S, \A, \Prob, \mu)$ with a target set $\T \subseteq \S$, consider the following alternative definitions of ECD:
\begin{align*}
        &\text{ECD}_\GG^1 \defas \sup_{\pi_\minp \in \Pi_\minp}\inf_{\pi_{\maxp} \in \Pi_\maxp }\text{ETR}_\GG(\pi_\maxp, \pi_\minp)\,,\\
        &\text{ECD}_\GG^2 \defas \sup_{\pi_\minp \in \Pi^0_\minp}\inf_{\pi_{\maxp} \in \BR(\pi_\minp)}\text{ETR}_\GG(\pi_\maxp, \pi_\minp)\,,\\
        &\text{ECD}_\GG^3 \defas \inf_{\pi_{\maxp} \in \Pi^0_\maxp}\sup_{\pi_\minp \in \Pi_\minp}\text{ETR}_\GG(\pi_\maxp, \pi_\minp)\,,\\
        &\text{ECD}_\GG^4 \defas \sup_{\pi_\minp \in \Pi_\minp}\inf_{\pi_{\maxp} \in \Pi^0_\maxp}\text{ETR}_\GG(\pi_\maxp, \pi_\minp)\,,\\
        &\text{ECD}_\GG^5 \defas \inf_{\pi_\minp \in \Pi_\minp}\sup_{\pi_{\maxp} \in \Pi_\maxp}\text{ETR}_\GG(\pi_\maxp, \pi_\minp)\,,
\end{align*}
where ETR$(\pi$) is the expected time to reach the target set using the strategy profile $(\pi_\maxp, \pi_\minp)$:
\[
    \text{ETR}_\GG(\pi_\maxp, \pi_\minp) \defas \E_\mu^\pi \left ( \arg\inf_{n \in \mathbb{N}}\mathbb{1}(s_n \in \T)~|~ (s_n)_{n \in \mathbb{N}} \cap \T \neq \emptyset \right ) \,.
\]
\end{definition}

These definitions fail in the following example for a reduction of PAC-RL for reachability to PAC-RL for finite-horizon reachability.

\begin{example}
     Consider a game, shown in \Cref{fig:example-to-fail}, with four states $\S = \{s_0, s_1, \top, \bot\}$ where $\top$ and $\bot$ are absorbing states. State $s_0$ belongs to $\maxp$ and $s_1$ belongs to $\minp$. The action set is $\A = \{a, b\}$. The target set is $\T = \{ \top \}$. In state $s_0$, playing action $a$ leads to $\top$ with probability $0.9$ and leads to $\bot$ with probability $0.1$, and playing action $b$ leads to $s_1$ with probability $1$. In state $s_1$, playing action $a$ leads to $\top$ with probability $0.8$ and leads to $\bot$ with probability $0.2$, and playing action $b$ leads to $\top$ with probability $0.001$ and self loops with probability $0.999$. The initial state is $s_0$. Therefore, in the case of the infinite-horizon version of the game, the optimal strategies for both players are to play the action $a$. However, for any finite-horizon game with horizon $L \le 100$, the optimal strategy for $\minp$ is to play action $b$. We need the ECD parameter to bound a horizon length for which a near-optimal strategy in the finite-horizon game is also near-optimal in the infinite-horizon game. However, all of the definitions above fail as their values are at most $2$. The values $\text{ECD}_\GG^1, \text{ECD}_\GG^2, \text{ECD}_\GG^3$, and $\text{ECD}_\GG^4$ are equal to $1$ for this game since the infimum over player-$\maxp$ actions selects action $a$ which ends the game immediately. The value of $\text{ECD}_\GG^5$ is $2$ since the infimum over player-$\minp$ actions selects the action $a$. It is noteworthy that changing the probabilities of action $b$ in the state $s_1$ makes the gap between the needed horizon and ECD values larger. In contrast, our definition of ECD provides a suitable bound on the horizon since $\text{ECD}_\GG = 1001$. 
\end{example}

\begin{figure}
    \centering
    \begin{tikzpicture}[
            >=Latex,
            node distance=10mm and 12mm,
            player/.style={
                draw,
                regular polygon,
                regular polygon sides=3,
                minimum size=10mm,
                inner sep=0pt
            },
            playerdown/.style={
                player,
                shape border rotate=180
            },
            chance/.style={
                circle,
                fill=black,
                inner sep=1pt
            },
            terminal/.style={
                draw,
                minimum width=9mm,
                minimum height=9mm,
                inner sep=0pt
            }
        ]
        
        % Nodes
        \node[player]     (s0) at (0,0) {$s_0$};
        \node[chance]     (u)  at (2,0.7) {};
        \node[playerdown] (s1)  at (2,-0.7) {$s_1$};
        \node[chance]     (w)  at (3,-0.7) {};
        \node[chance]     (x)  at (3,-1.4) {};
        \node[terminal]   (T)  at (5,-1.4) {$\top$};
        \node[terminal]   (L)  at (5,0.7) {$\bot$};
        
        % Edges from left player node
        \draw[->] (s0) -- node[above] {$a$} (u);
        \draw[->] (s0) -- node[below] {$b$} (s1);
        
        % Upper stochastic node
        \draw[->] (u) -- node[above] {$0.1$} (L);
        \draw[->] (u) -- node[pos=0.15,right] {$0.9$} (T);
        
        % Lower player node
        \draw[->] (s1) -- node[above] {$a$} (w);
        \draw[->] (s1) -- node[below] {$b$} (x);
        
        % Middle stochastic node
        \draw[->] (w) -- node[above] {$0.2$} (L);
        \draw[->] (w) -- node[below] {$0.8$} (T);
        
        % Bottom stochastic node
        \draw[->] (x) to[bend left=55] node[below] {$0.999$} (s1);
        \draw[->] (x) -- node[below] {$0.001$} (T);
    
    \end{tikzpicture}
    \caption{A game where alternative generalizations of ECD fail}
    \label{fig:example-to-fail}
\end{figure}

\begin{definition}[Expected Conditional Distance]
\label{def:ecd}
    Given a TBSG $\GG = (\S, \A, \Prob, \mu)$ with a target set $\T \subseteq \S$, the \emph{expected conditional distance} is defined as follows.
    
    \[
        \text{ECD}_\GG \defas \sup_{\pi_\minp \in \Pi_\minp}\inf_{\pi_{\maxp} \in \BR(\pi_\minp)}\text{ETR}_\GG(\pi_\maxp, \pi_\minp)\,,
    \]
    where $\text{ETR}_\GG(\pi_\maxp, \pi_\minp)$ is defined as in \Cref{def:alt-gen}.
\end{definition}
\textbf{Description of ECD.} If the ECD parameter is bounded by $L$, then for all strategies for player~$\minp$, there exists a best-response strategy of player $\maxp$ that can reach the target set $\T$ in expected time at most $L$. More formally, the ECD parameter is defined as follows. 
% \textcolor{blue}{\sout{The player $\minp$ first chooses an arbitrary strategy $\pi_\minp$. Given $\pi_\minp$, ECD restricts the player $\maxp$ to strategies that are best responses for the reachability objective.} 
For any player-$\minp$ strategy, take the set of player-$\maxp$ strategies that are best-responses for the reachability objective.
Among these reachability-optimal strategies, ECD takes the one that minimizes the expected number of steps to reach the target, conditioned on the target being reached. Finally, ECD takes the maximum of this quantity over all player-$\minp$ strategies.

We now define the PAC learning framework with respect to the ECD parameter.

\begin{definition}[PAC-RL with ECD]
    A pair of learning algorithms $(\mathfrak{A}_\maxp, \mathfrak{A}_\minp)$ is PAC-RL with ECD if there
    exists a function $f$ such that
    for all $\eps, p \in (0, 1)$, all $L \in \NN$, and all TBSGs $\GG = (\S, \A, \Prob, \mu)$ 
    with a target set $\T \subseteq \S$ such that $\text{ECD}_\GG \le L$, 
    taking $\\N = f(|\S|, |\A|, \frac{1}{p}, \frac{1}{\eps}, L)$, 
    with probability at least $1 - p$, the simulator terminates with a strategy profile $(\pi_\maxp, \pi_\minp)$ after at most $N$ procedure calls where both strategies are $\eps$-optimal.
\end{definition}

\begin{remark}
    The difference between this definition and the standard PAC-RL definition 
    is the inclusion of the ECD parameter $L$ in the function $f$.
    % The key insight is that when the ECD is small, the learning algorithms
    % do not require to consider histories of length much longer than $L$ to find 
    % $\eps$-optimal strategies, which allows to circumvent the impossibility result for the general case.
\end{remark}

We now define an objective called finite-horizon reachability and show that the problem of PAC-RL with ECD for reachability objectives can be reduced to the problem of PAC-RL for finite-horizon reachability objectives.

\begin{definition}[Finite-horizon Reachability Objectives]
    Given a set of target states $\T$ and a time horizon $L \in \mathbb{N}$, the finite-horizon reachability objective requires that a target state is visited within the first $L$ steps, i.e., $\Reach_L(\T) \defas \{ \omega \in \Omega \colon \exists \ell \in [L] \quad s_\ell \in \T \}.$ We also admit $L \in \mathbb{R}$ in which case the target has to be visited within the first $\lfloor L \rfloor$
    where $\lfloor x \rfloor$ is the biggest number $n \in \mathbb{N}$ such that $n \leq x$.
    We denote
    $\Val_{R_L(\T)}^{\GG}(\mu) \defas \sup_{\pi_\maxp \in \Pi_\maxp} \inf_{\pi_\minp \in \Pi_\minp} \PP_\mu^{\pi} (\omega \in \Reach_L(\T))$. We omit writing $\GG$ when clear from the context.
\end{definition}

We similarly define the PAC-RL for finite-horizon reachability objectives.

\begin{definition}[PAC-RL for Finite-horizon Reachability Objectives]
    A pair of learning algorithms $(\mathfrak{A}_\maxp, \mathfrak{A}_\minp)$ is PAC-RL for finite-horizon reachability objectives if there
    exists a function $f$ such that
    for all $\eps, p \in (0, 1)$, all TBSGs $\GG = (\S, \A, \Prob, \mu)$ with a target set $\T$, and all time-horizons $L \in \NN$, taking $N = f(|\S|, |\A|, \frac{1}{p}, \frac{1}{\eps}, L)$, with probability at least $1 - p$, the simulator terminates with a strategy profile $(\pi_\maxp, \pi_\minp)$ after at most $N$ procedure calls where both strategies are $\eps$-optimal.
\end{definition}

\begin{proposition}
\label{prop:inifinte-to-finite-approx}
    Let $\pi$ be a strategy profile such that $\text{ETR}_\GG(\pi) \le L$. Then, for all $\eps > 0$ we have
    \[
        \left |\PP_\mu^\pi  ( \omega \in \Reach_{\frac{L}{\eps}}(\T)  ) - \PP_\mu^\pi \left (\omega \in \Reach(\T) \right ) \right | \le \eps\,.
    \]
\end{proposition}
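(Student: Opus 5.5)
Let $\tau \defas \inf\{n \in \NN \colon s_n \in \T\}$ be the hitting time of the target set, with $\tau = \infty$ if $\T$ is never visited. Then $\Reach(\T) = \{\tau < \infty\}$ and $\Reach_{L/\eps}(\T) = \{\tau \le \lfloor L/\eps \rfloor\} = \{\tau \le L/\eps\}$, since $\tau$ is integer valued. The plan is to apply Markov's inequality to $\tau$ under the conditional measure $\PP_\mu^\pi(\cdot \mid \tau < \infty)$. Under this measure the expectation of $\tau$ is exactly $\text{ETR}_\GG(\pi) \le L$ by \Cref{def:alt-gen}, reading the $\arg\inf$ there as the first hitting index $\tau$.

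\textbf{Step 1: reduce to a tail probability.} Since $\Reach_{L/\eps}(\T) \subseteq \Reach(\T)$, the absolute value in the statement equals
\[
\PP_\mu^\pi(\tau < \infty) - \PP_\mu^\pi(\tau \le L/\eps) = \PP_\mu^\pi(L/\eps < \tau < \infty).
\]
If $\PP_\mu^\pi(\tau<\infty)=0$, both probabilities are zero and there is nothing to prove. We may also assume $\eps < 1$, since otherwise the bound is trivial.

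\textbf{Step 2: conditional Markov inequality.} Write
\[
\PP_\mu^\pi(L/\eps < \tau < \infty) = \PP_\mu^\pi(\tau<\infty)\cdot \PP_\mu^\pi(\tau > L/\eps \mid \tau < \infty).
\]
Markov's inequality for the nonnegative random variable $\tau$ under the conditional measure gives
\[
\PP_\mu^\pi(\tau > L/\eps \mid \tau<\infty) \le \frac{\E_\mu^\pi(\tau \mid \tau<\infty)}{L/\eps} \le \frac{L}{L/\eps} = \eps.
\]
Multiplying by $\PP_\mu^\pi(\tau<\infty) \le 1$ yields the claim.

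\textbf{Main obstacle.} The only delicate points are bookkeeping. First, one must justify identifying $\text{ETR}_\GG$ with $\E_\mu^\pi(\tau \mid \tau<\infty)$, which is the intended meaning of the definition. Second, the case $L = 0$ must be handled: then $\tau = 0$ almost surely on $\{\tau<\infty\}$, and the horizon $L/\eps$ should be read as $0$, so both events coincide almost surely and the difference is $0$. Finally, the floor in the definition of $\Reach_{L/\eps}$ causes no loss, because $\tau \le L/\eps$ holds if and only if $\tau \le \lfloor L/\eps \rfloor$ for integer $\tau$.
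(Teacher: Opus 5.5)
Your proof is correct and follows the paper's route: apply Markov's inequality to the hitting time under the measure conditioned on reaching $\T$, using $\text{ETR}_\GG(\pi) \le L$, and then multiply by $\PP_\mu^\pi(\tau<\infty)\le 1$. Your extra bookkeeping makes explicit what the paper's one-line proof leaves implicit, namely the containment $\Reach_{L/\eps}(\T)\subseteq\Reach(\T)$ that removes the absolute value, the floor, and the degenerate cases $L=0$ or zero reach probability.
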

\begin{proof}
    Recall that $\text{ETR}_\GG(\pi) = \E_\mu^\pi \left (\arg\inf_{n \in \mathbb{N}}\mathbb{1}(s_n \in \T)~|~ (s_n)_{n \in \mathbb{N}} \cap \T \neq \emptyset \right )$. By Markov's inequality and $\text{ETR}_\GG(\pi) \le L$, we have $\PP_\mu^\pi \left (\arg\inf_{n \in \mathbb{N}}\mathbb{1}(s_n \in \T) > \frac{L}{\eps} ~|~ (s_n)_{n \in \mathbb{N}} \cap \T \neq \emptyset \right ) \le \eps$, which yields the result.
\end{proof}

\begin{restatable}{theorem}{reductiontofinite}
\label{the:reduction}
    If a pair of learning algorithms $(\mathfrak{A}_\maxp, \mathfrak{A}_\minp)$ is PAC-RL for finite-horizon reachability objectives, then it is PAC-RL with ECD for reachability objectives.
\end{restatable}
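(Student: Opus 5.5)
The plan is to instantiate the finite-horizon learners with a horizon $H$ of order $L/\eps$ and accuracy $\eps/2$. We then show that $\eps/2$-optimality in the game with objective $\Reach_H(\T)$ transfers to $\eps$-optimality for $\Reach(\T)$ whenever $\text{ECD}_\GG \le L$. Concretely, set $\eps' = \eps/2$ and $H = \lceil (L+1)/\eps' \rceil$, and run $(\mathfrak{A}_\maxp,\mathfrak{A}_\minp)$ with parameters $(\eps', p, H)$. The resulting sample complexity is $f(|\S|,|\A|,\frac1p,\frac2\eps,\lceil 2(L+1)/\eps\rceil)$, which is polynomial whenever $f$ is. The extra $+1$ slack handles the fact that the infimum in \Cref{def:ecd} need not be attained. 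For every $\pi_\minp$ the set $\BR(\pi_\minp)$ is nonempty, because against a fixed $\pi_\minp$ player $\maxp$ faces an MDP, where optimal strategies exist. Hence there is some $\pi_\maxp \in \BR(\pi_\minp)$ with $\text{ETR}_\GG(\pi_\maxp,\pi_\minp) \le L+1$.

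\textbf{Step 1 (value comparison).} Since $\Reach_H(\T)\subseteq \Reach(\T)$, we have $\Val_{R_H(\T)}(\mu)\le \Val_{R(\T)}(\mu)$. For the reverse direction, fix any $\pi_\minp$ and pick $\pi_\maxp\in\BR(\pi_\minp)$ with ETR at most $L+1$. \Cref{prop:inifinte-to-finite-approx}, applied with $L+1$ and $\eps'$, gives
\[
\PP^{(\pi_\maxp,\pi_\minp)}_\mu(\Reach_H(\T)) \ge \sup_{\pi'_\maxp}\PP^{(\pi'_\maxp,\pi_\minp)}_\mu(\Reach(\T)) - \eps' .
\]
Taking the infimum over $\pi_\minp$ and using determinacy of the finite-horizon game (a finite game after unfolding steps, or by the same argument as \Cref{thm:determinacy-in-tbsg}) yields $\Val_{R_H(\T)}(\mu)\ge \Val_{R(\T)}(\mu)-\eps'$.

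\textbf{Step 2 (player $\maxp$).} Let $\sigma$ be the proposed $\maxp$ strategy, which is $\eps'$-optimal for $\Reach_H(\T)$. For every $\pi_\minp$ the inclusion of objectives gives
\[
\PP^{(\sigma,\pi_\minp)}_\mu(\Reach(\T))\ge \PP^{(\sigma,\pi_\minp)}_\mu(\Reach_H(\T))\ge \Val_{R_H(\T)}(\mu)-\eps' .
\]
By Step 1 the right-hand side is at least $\Val_{R(\T)}(\mu)-2\eps'=\Val_{R(\T)}(\mu)-\eps$.

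\textbf{Step 3 (player $\minp$), the main obstacle.} Here the inclusion of objectives points the wrong way, and this is exactly where the $\sup_{\pi_\minp}$ in \Cref{def:ecd} is needed. Let $\tau$ be the proposed $\minp$ strategy, which is $\eps'$-optimal for $\Reach_H(\T)$. Apply the ECD bound to $\tau$ itself to get $\pi_\maxp\in\BR(\tau)$ with ETR at most $L+1$. Then, using \Cref{prop:inifinte-to-finite-approx}, the $\eps'$-optimality of $\tau$, and the inequality $\Val_{R_H(\T)}\le\Val_{R(\T)}$, we obtain
\[
\sup_{\pi'_\maxp}\PP^{(\pi'_\maxp,\tau)}_\mu(\Reach(\T)) = \PP^{(\pi_\maxp,\tau)}_\mu(\Reach(\T)) \le \PP^{(\pi_\maxp,\tau)}_\mu(\Reach_H(\T))+\eps' \le \Val_{R_H(\T)}(\mu)+2\eps' \le \Val_{R(\T)}(\mu)+\eps .
\]
So $\tau$ is $\eps$-optimal. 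Both guarantees hold on the same event of probability at least $1-p$ on which the finite-horizon learners succeed, within the stated number of calls. This establishes PAC-RL with ECD.
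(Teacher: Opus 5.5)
Your proposal is correct and follows essentially the same route as the paper's proof. It uses horizon $2(L+1)/\eps$ with accuracy $\eps/2$, the value comparison $\Val_{R(\T)}(\mu)-\eps/2 \le \Val_{R_H(\T)}(\mu) \le \Val_{R(\T)}(\mu)$ via a best response with $\text{ETR}\le L+1$ and \Cref{prop:inifinte-to-finite-approx}, objective inclusion for $\maxp$, and the ECD bound applied to the learned $\minp$ strategy itself.

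One correction: nonemptiness of $\BR(\pi_\minp)$ follows from the hypothesis $\text{ECD}_\GG \le L$ (an infimum over the empty set is $+\infty$), not from optimal strategies existing in an MDP. Against a history-dependent randomized $\pi_\minp$, the induced MDP is countably infinite, and a maximizing-reachability optimal strategy need not exist there.
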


\begin{proof}[Proof Sketch.]
    Let $H=2(L+1)/\varepsilon$. Run the finite-horizon PAC-RL algorithm with horizon $H$ and error tolerance $\varepsilon/2$. With probability at least $1-p$, it returns a profile $\pi^\star$ that is $\varepsilon/2$-optimal for the finite-horizon reachability game.
    Since $\text{ECD}_\GG\le L$, for every player-$\minp$ strategy $\pi_\minp$ there exists a best-response $\pi_\maxp$ of $\maxp$ such that $\text{ETR}_\GG(\pi_\maxp, \pi_\minp) \le L+1$. By Proposition 13, truncating reachability to horizon $H=2(L+1)/\varepsilon$ changes the reachability probability of such a best response by at most $\varepsilon/2$. Therefore, $|\Val_{R(\T)}(\mu) - \Val_{R_{\frac{2(L + 1)}{\eps}}(\T)}(\mu)| \le \frac{\eps}{2}$.
    Combining this with the $\varepsilon/2$-optimality of $\pi^\star$ in the finite-horizon game gives that $\pi^\star$ is $\varepsilon$-optimal for the original reachability objective. Hence PAC-RL for finite-horizon reachability implies PAC-RL with ECD for reachability.
\end{proof}

We now discuss several key aspects of the ECD parameter which justify why we use this parameter in this work.\\
\textbf{Properties of ECD.} our ECD definition has two important properties: First, for every game it is finite. Second, it has a meaningful intuition to bound the horizon of the game, i.e., it captures that for every strategy of player $\minp$, there exists a counter-strategy of player $\maxp$ such that (i)~the counter-strategy is optimal for the reachability objectives with respect to the strategy of player $\minp$; and (ii)~the expected time to reach is small.\\
\textbf{Estimation of ECD.} 
    A related parameter is the minimum 
    non-zero transition probability $p_{\min}$ of a TBSG $\GG$. This parameter has been used in the context of PAC learning for TBSGs with reachability objectives~\cite{DBLP:conf/cav/AshokKW19}.
    The minimum non-zero transition probability $p_{\min}$ provides a bound on the expected time to reach the target set $\T$. 
    Indeed, if $p_{\min} > 0$, then for any strategy profile $\pi$, we have $\text{ETR}_\GG(\pi) \le (1/p_{\min})^{|\S|}$. However, this worst-case bound is exponential, while the ECD parameter can be much smaller. Better bounds require more information about the game. Since we provide the theoretical foundation in this work, model-dependent estimation of this parameter is subject for future work.\\
\textbf{Stochastic Shortest Path.} 
    A closely-related parameter to ECD is the \emph{stochastic shortest path} (SSP) parameter~\cite{DBLP:journals/mor/BertsekasT91}.
    The difference between SSP and ECD is that, in SSP, player $\maxp$ requires to reach the target set $\T$ as soon as possible, while player $\minp$
    wants to delay the reachability of the target set $\T$ as much as possible. The SSP parameter measures non-reaching plays as having infinite cost. Therefore, this parameter can be infinite. In contrast, our definition guarantees that the parameter is always finite. Finiteness of the parameter is necessary for the reduction to finite-horizon games.

\section{PAC-RL for Finite-horizon Reachability}
\label{sec:pac-rl-for-finite}

\newcommand{\esign}{\varepsilon^{significant}}
\newcommand{\eerr}{\eps_{bai}}
\newcommand{\eest}{\eps_{emp}}
\newcommand{\eimp}{\varepsilon^{improvement}}
\newcommand{\unexplored}{U}
\newcommand{\wrong}{Wrong}
\newcommand{\wellexp}{\text{Estim}}
\newcommand{\bestarm}{\text{Best-Arm}}
\newcommand{\updatebai}{\text{Update-BAI}}
\newcommand{\initbai}{\text{Initialise-BAI}}
\newcommand{\counted}{C}

In this section, we present a pair of algorithms $(\ALG_\maxp$, $ \ALG_\minp)$ for PAC-RL of TBSGs with finite-horizon reachability. $\ALG$ stands for Learning Turn-based Reachability Games.

\begin{theorem}\label{the:main}
The pair of algorithms $(\ALG_\maxp$, $ \ALG_\minp)$ is PAC-RL for finite-horizon reachability objectives with sample complexity $O\left(\frac{|\S|^3L^7|\A|\log(|\S|^2L^2/p)}{\eps^3}\right)$.
\end{theorem}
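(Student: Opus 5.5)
We will prove \Cref{the:main} by analysing the expanded finite-horizon game over state-step pairs $(s,\ell)$, $s\in\S$, $\ell\in[L]$. On this game the finite-horizon value is computed by backward induction: $V_L(s)=\mathbb{1}(s\in\T)$, and for $\ell<L$
\[
V_\ell(s)=\max_a \textstyle\sum_{s'}\Prob(s,a)(s')V_{\ell+1}(s')
\]
at $\maxp$ states, with $\min$ at $\minp$ states. A Markovian strategy that is $\eps/(2L)$-greedy with respect to the \emph{true} continuation values of the already-fixed later-step strategies is $\eps$-optimal. This follows from a telescoping (performance-difference) argument: the per-step losses along any play add up to at most $L\cdot\eps/(2L)$. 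The algorithm therefore only has to certify, at every state-step that matters, an action that is near-optimal given the strategies already fixed at later steps.

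\textbf{Stagewise structure.} The plan is to process the horizon backwards, $\ell=L-1,\dots,0$, in stages.

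In each stage, both players run a fixed, commonly known exploration schedule: a uniform mixture over the current set of tracking strategies, together with auxiliary reach objectives toward unexplored state-steps. Because the schedule is deterministic given the round index, and resets and targets are announced to both players, both players stay synchronized without seeing each other's states or actions.

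At a state-step $(s,\ell)$ owned by player $i$, player $i$ runs a best-arm identification routine (e.g.\ successive elimination). Each arm $a$ is sampled by playing $a$ at $(s,\ell)$ and then following the already-fixed later strategies of both players. The return is the indicator of reaching $\T$ by time $L$, which is an unbiased Bernoulli sample of the continuation value $Q_\ell(s,a)$.

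By Hoeffding's inequality, $O\!\left(\frac{L^2}{\eps^2}\log\frac{|\S|L|\A|}{p'}\right)$ samples per arm identify an $\eps/(2L)$-optimal arm. A union bound with $p'=p/(|\S|L)$ over all state-steps gives the $\log(|\S|^2L^2/p)$ factor.

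\textbf{Main obstacle: reaching $(s,\ell)$ often enough.} Player $i$ cannot steer the play to $(s,\ell)$ alone, and the other player does not observe $s$. We handle this with a significance threshold. A state-step is \emph{relevant} if it is visited with probability at least $\eta=\Theta(\eps/(|\S|L))$ under some strategy in the tracking set. State-steps below the threshold are left arbitrary, and since there are at most $|\S|L$ of them, their total contribution to the value error is at most $|\S|L\eta\le\eps/2$.

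Relevant state-steps are discovered by keeping the unexplored ones as auxiliary targets. When a new state-step is discovered, the strategy that reached it is added to the tracking set, so previously discovered state-steps stay reachable with probability at least $\eta/|\text{set}|$.

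The key lemma to prove is the following: if some relevant state-step remains unexplored, then the auxiliary-target game at the current stage has value at least about $\eta$, and both players' learned strategies for it (built recursively by the same backward-induction routine) reach $U$ with probability $\Omega(\eta)$. I expect this coupling to be the hardest part. In particular, $\minp$ must cooperate on the auxiliary targets while being adversarial on $\T$, and the argument has to show that the final strategies remain $\eps$-optimal against \emph{all} opponent strategies, not only those in the tracking set. I would first try a potential argument: each round either discovers a new state-step, and there are at most $|\S|L$ such rounds, or certifies that every unexplored state-step has visitation probability below $\eta$.

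\textbf{Sample count.} The total number of samples is the product of:
\begin{itemize}
\item at most $|\S|L$ discovery rounds;
\item $|\S|L$ state-steps each requiring $|\A|$ arms with $L^2/\eps^2\cdot\log$ samples;
\item an inflation factor $O(|\S|L^2/\eps)$ from having to hit a relevant state-step of probability $\eta$ through a tracking mixture;
\item $L$ simulator calls per episode.
\end{itemize}
This gives $O\!\left(|\S|^3L^7|\A|\log(|\S|^2L^2/p)/\eps^3\right)$. The last step is to check that the stated exponents come out exactly from this product under the chosen $\eta$ and best-arm accuracy $\eps/(2L)$.
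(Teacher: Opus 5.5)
There is a genuine gap, and it sits exactly at the step you call ``the hardest part''.

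You learn the candidate strategies by best-arm identification on the \emph{true} reward (reach $\mathcal{T}$ by time $L$). You leave play arbitrary at state-steps that no tracking strategy visits with probability at least $\eta$, and then assert that these contribute at most $|\mathcal{S}|L\eta$ to the error. That bound does not hold for the quantity you need. Optimality of $\pi^f_{\mathrm{Min}}$ is a statement about $\sup_{\pi_{\mathrm{Max}}}\PP^{\pi_{\mathrm{Max}},\pi^f_{\mathrm{Min}}}_\mu(\Reach_L(\mathcal{T}))$. By the performance-difference identity, the loss at a non-greedy $\mathrm{Max}$ state-step is weighted by its visitation probability under the \emph{deviating} profile $(\pi_{\mathrm{Max}},\pi^f_{\mathrm{Min}})$, not under any tracking strategy. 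A deviation can steer the play into state-steps the tracking set never reaches. There the learned action is arbitrary, and the BAI choices at earlier steps were made against that arbitrary continuation. So your certificate (every unexplored state-step has visitation probability below $\eta$ under the tracking strategies) does not control the error. The proposed key lemma is also left unproved, and it asks $\mathrm{Min}$ to cooperate toward auxiliary targets that, with private states, it cannot observe.

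The missing idea is to put optimism into the candidate strategy itself, so that the exploration strategy and the candidate are the same object. In \Cref{algorithm:main} each player's BAI reward is its own auxiliary objective, and each is observable by its owner:
\begin{itemize}
\item $\mathsf{NotUntil}(\emptyset,U^q_{\mathrm{Max}}\cup\mathcal{T}_{\mathrm{Max}})$ for $\mathrm{Max}$;
\item $\mathsf{NotUntil}(\mathcal{T}_{\mathrm{Max}},U^q_{\mathrm{Min}}\cup\mathcal{T}_{\mathrm{Min}})$ for $\mathrm{Min}$.
\end{itemize}
An unexplored own state-step thus already has the maximal value $1$, so no deviation can gain there. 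Event $E$(c) and backward induction then make $\pi^f_i$ an $L\varepsilon_{\mathrm{bai}}$-best response against \emph{all} deviations in player $i$'s auxiliary game (\Cref{lem:constructed}).

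The price of optimism is paid only under $\pi^f$ itself. Since $\pi^f$ is a tracking strategy of the last stage $f+1$, and that stage removes nothing, $V^{\pi^f}_{\emptyset,U^f}\le 4|\mathcal{S}'|\varepsilon_{\mathrm{emp}}$ (\Cref{lem:unexplored}). Hence
\[
V^{\pi_{\mathrm{Max}},\pi^f_{\mathrm{Min}}}_{\emptyset,\mathcal{T}_{\mathrm{Max}}}\le V^{\pi_{\mathrm{Max}},\pi^f_{\mathrm{Min}}}_{\emptyset,U^f_{\mathrm{Max}}\cup\mathcal{T}_{\mathrm{Max}}}\le V^{\pi^f}_{\emptyset,U^f_{\mathrm{Max}}\cup\mathcal{T}_{\mathrm{Max}}}+L\varepsilon_{\mathrm{bai}}\le V^{\pi^f}_{\emptyset,\mathcal{T}_{\mathrm{Max}}}+4|\mathcal{S}'|\varepsilon_{\mathrm{emp}}+L\varepsilon_{\mathrm{bai}},
\]
and symmetrically for $\mathrm{Min}$. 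No ``value at least $\eta$'' lemma and no cooperation are needed.

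Separately, your sample count is fitted to the target rather than derived. In the paper it is stages $\times$ levels $\times$ tracking strategies $\times K\times(L+1)$, with $K=C|\mathcal{A}|\log(|\mathcal{S}|^2L^2/p)/(\varepsilon_{\mathrm{emp}}\varepsilon_{\mathrm{bai}}^2)$. The $1/\varepsilon_{\mathrm{emp}}$ factor is what guarantees $K\varepsilon_{\mathrm{emp}}$ BAI samples at every state-step that passes the $3K\varepsilon_{\mathrm{emp}}$ removal test.
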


\Cref{the:reduction,the:main} imply a result for reachability objectives with ECD assumption. 

\begin{corollary}\label{cor:main}
The pair of algorithms $(\ALG_\maxp$, $\ALG_\minp)$ is PAC-RL with ECD for reachability objectives with sample complexity $O\left(\frac{|\S|^3L^7|\A|\log({|\S|^2L^2}/(p\eps^2))}{\eps^{10}}\right)$.
\end{corollary}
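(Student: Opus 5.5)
The corollary follows by composing \Cref{the:reduction} with \Cref{the:main}. The only real work is tracking how the horizon and the accuracy change under the reduction. Fix $\eps, p \in (0,1)$ and a game $\GG$ with $\text{ECD}_\GG \le L$.

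\textbf{Step 1: run the finite-horizon algorithm.} Following the proof of \Cref{the:reduction}, both players run $(\ALG_\maxp, \ALG_\minp)$ on the finite-horizon reachability objective with horizon $H = 2(L+1)/\eps$, accuracy $\eps/2$ and failure probability $p$. Each player can compute $H$ locally from the commonly known $L$ and $\eps$, so no communication or shared information is needed. The decoupled and private structure of the algorithms is therefore preserved.

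\textbf{Step 2: correctness.} By \Cref{the:main}, with probability at least $1-p$ the simulator terminates with a profile that is $\eps/2$-optimal for $\Reach_H(\T)$. The argument in the proof of \Cref{the:reduction} combines the ECD bound with \Cref{prop:inifinte-to-finite-approx}, applied to the best responses of $\maxp$ whose expected time to reach is bounded by $L+1$. This gives $|\Val_{R(\T)}(\mu) - \Val_{R_H(\T)}(\mu)| \le \eps/2$. Consequently, the returned strategies are $\eps$-optimal for $\Reach(\T)$.

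\textbf{Step 3: sample complexity.} Substitute $L \mapsto H = \Theta(L/\eps)$ and $\eps \mapsto \eps/2$ into the bound of \Cref{the:main}:
\[
O\!\left(\frac{|\S|^3 H^7 |\A| \log(|\S|^2 H^2/p)}{(\eps/2)^3}\right)
= O\!\left(\frac{|\S|^3 L^7 |\A| \log(|\S|^2 L^2/(p\eps^2))}{\eps^{10}}\right),
\]
since $H^7 = O(L^7/\eps^7)$ and $H^2 = O(L^2/\eps^2)$. This step needs $L \ge 1$, so that $L+1 = O(L)$; if $L = 0$, replace $L$ by $\max(L,1)$. Constant factors are absorbed into the $O(\cdot)$.

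\textbf{Main obstacle.} There is no substantive difficulty beyond the two cited results. The one point needing care is that the reduction is used exactly as the proof of \Cref{the:reduction} states it, with horizon $2(L+1)/\eps$ and halved accuracy. The rest is parameter substitution.
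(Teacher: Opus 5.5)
Your proposal is correct and follows the paper's own argument: run the finite-horizon algorithms with horizon $2(L+1)/\varepsilon$ and accuracy $\varepsilon/2$ as in the proof of \Cref{the:reduction}, then substitute these parameters into the bound of \Cref{the:main}. The paper's proof is a one-line pointer to exactly this, and your explicit arithmetic, your remark that each player can compute the horizon locally, and your $L \ge 1$ caveat fill in details the paper leaves implicit.
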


\begin{proof}
To obtain $\varepsilon$-optimal strategies for the infinite-horizon reachability, we use finite-horizon algorithms with the length of the game $2(L+1)/\varepsilon$ (see the proof of \Cref{the:reduction}).
\end{proof}
\textbf{Significance.} \Cref{cor:main} establishes that reachability becomes PAC learnable in turn-based stochastic games in a decentralized and private information setting under bounded ECD. To the best of our knowledge, this is the first result that (i) handles decentralized and private learning; or (ii) provides explicit polynomial sample complexity bounds. 

This section is organized as follows. We first recall some algorithms from bandit learning literature.
%Then, we present the naive algorithm, and finally present the algorithm with polynomial sample complexity. 
Then, we describe the $\ALG$ algorithms and finally, we prove \Cref{the:main}.

%In this section, we present two algorithms for PAC-RL of TBSGs with finite-horizon reachability. The first algorithm is a naive approach which reduces the problem to two-player zero-sum bandit learning. However, this approach yields exponential sample complexity, which is not desirable. The second algorithm is more technically involved and yeilds polynomial sample complexity. This section is organized as follows. We first recall some algorithms from bandit learning literature. We then present the naive algorithm, and finally present the algorithm with polynomial sample complexity. 

% . It extends the state space to state-step pairs
% It iteratively runs best-arm-identification routine for all reachable state-step pairs in a backward manner.

\subsection{Best-Arm Identification}\label{sec:bandit}

In this subsection, we recall a problem in the bandit learning literature and an optimal solution for it.
\emph{The best arm identification bandit learning} problem asks to find an $\eps$-optimal arm with probability $1 - p$.
%The second problem is two-player zero-sum bandit learning. In this setting, there is an unknown matrix game in which both players try to find the $\eps$-optimal strategies with probability $1 - p$.
%For a single player who has a finite set of options and each option has a stochastic reward, the problem is usually referred to as the Best-Arm-Identification problem.
%\subparagraph{Best-Arm-Identification bandit learning.}
Let $\mathcal{A}$ be a set of arms where each arm is associated with an unknown value $r: \mathcal{A} \rightarrow [0, 1]$.
A player can sample an arm $a \in \mathcal{A}$ to obtain a random reward $R \in \{0, 1\}$ such that $\E(R) = r(a)$.
We say an algorithm can identify an $\eps$-optimal arm with sample complexity $f$ and confidence $1 - p$, if for any set of arms $\mathcal{A}$ and any $p, \eps \in [0, 1]$, after $f(|\mathcal{A}|, 1/p, 1/\varepsilon)$ samples, with probability at least $1 - p$ it outputs a candidate arm $a'$ such that $|\max_{a}r(a) - r(a')| \leq \eps$.
A basic approach is to sample each arm $\frac{\log(|\A|/p)}{\eps^2}$ times, estimate its unknown value, and then select the best arm. The guarantees follow directly from Hoeffding's bound. A better sample complexity is achieved by the Median Elimination algorithm \cite{DBLP:journals/jmlr/Even-DarMM06}[Theorem 10], presented in \Cref{alg:bai} in \Cref{app:sec:algorithms}.
See \Cref{lem:bai} for the formal statement. This sample complexity matches the lower bound for this problem \cite{DBLP:journals/jmlr/MannorT04}.

\begin{lemma}[\cite{DBLP:journals/jmlr/Even-DarMM06}[Theorem 10]]
\label{lem:bai}
For a set of arms $\A$, a value function $r: \A \rightarrow [0, 1]$, error and confidence $\eps, p \in (0, 1)$, \Cref{alg:bai} identifies an $\eps$-optimal arm with confidence $1 - p$ and sample complexity $O\left(\frac{|\A|\log(1/p)}{\eps^2}\right)$.
\end{lemma}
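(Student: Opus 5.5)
This statement is the guarantee of the Median Elimination algorithm of Even-Dar, Mannor and Mansour, so the plan is to reproduce their round-based argument rather than derive anything new. \Cref{alg:bai} proceeds in rounds $\ell = 1, 2, \ldots$ and keeps a set $S_\ell$ of surviving arms, with $S_1 = \A$. The round parameters are set as $\eps_1 = \eps/4$, $p_1 = p/2$, $\eps_{\ell+1} = \tfrac34 \eps_\ell$ and $p_{\ell+1} = p_\ell/2$. In round $\ell$, every arm in $S_\ell$ is sampled $\Theta(\log(3/p_\ell)/(\eps_\ell/2)^2)$ times, giving an empirical mean $\hat r(a)$. The arms whose empirical mean is below the median of $\{\hat r(a) : a \in S_\ell\}$ are discarded. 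This repeats until a single arm remains, which is output. Since the survivor set halves each round, there are at most $\lceil \log_2 |\A| \rceil$ rounds.

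\textbf{Correctness.} The key step is a per-round claim: with probability at least $1-p_\ell$, we have $\max_{a \in S_{\ell+1}} r(a) \ge \max_{a \in S_\ell} r(a) - \eps_\ell$. To prove it, let $a^*$ be the best arm of $S_\ell$.
\begin{itemize}
\item By Hoeffding's bound, $\hat r(a^*) < r(a^*) - \eps_\ell/2$ with probability at most $p_\ell/3$.
\item Condition on this event not happening. Call an arm \emph{bad} if $r(a) < r(a^*) - \eps_\ell$. Each bad arm satisfies $\hat r(a) \ge \hat r(a^*)$ with probability at most $p_\ell/3$, again by Hoeffding.
\item By Markov's inequality, the probability that at least half of $S_\ell$ consists of bad arms beating $a^*$ empirically is at most $2p_\ell/3$.
\item If fewer than half do, some non-bad arm ranks at or above $a^*$, so either $a^*$ survives or a non-bad arm survives. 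In both cases $S_{\ell+1}$ contains an arm within $\eps_\ell$ of $r(a^*)$.
\end{itemize}
A union bound over the rounds gives total failure probability at most $\sum_\ell p_\ell \le p$. The accumulated error is at most $\sum_\ell \eps_\ell \le \tfrac{\eps}{4} \sum_{k \ge 0} (3/4)^k = \eps$. Hence the output arm $a'$ satisfies $|\max_a r(a) - r(a')| \le \eps$.

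\textbf{Sample complexity.} The total number of samples is at most
\[
\sum_\ell \frac{|\A|}{2^{\ell-1}} \cdot O\!\left(\frac{\log(2^\ell/p)}{\eps^2 (3/4)^{2\ell}}\right) = O\!\left(\frac{|\A|}{\eps^2} \sum_\ell \left(\frac{8}{9}\right)^{\ell} \left(\ell + \log(1/p)\right)\right),
\]
and the geometric series converges to $O(|\A|\log(1/p)/\eps^2)$. The delicate point of the proof is balancing two effects: the decay rate of $\eps_\ell$ must be slow enough that $(3/4)^{-2}/2 = 8/9 < 1$, so the series converges, yet fast enough that the error sum stays bounded. The choice of ratio $3/4$ satisfies both.
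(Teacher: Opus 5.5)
Your proposal is correct and is the standard Median Elimination analysis of Even-Dar, Mannor and Mansour: a per-round Hoeffding-plus-Markov claim with failure probability $p_\ell$, the union bound $\sum_\ell p_\ell \le p$, and the geometric sums $\sum_\ell \eps_\ell \le \eps$ and $\sum_\ell (8/9)^\ell(\ell+\log(1/p))$; this is exactly the argument the paper relies on, since it gives no proof of its own beyond citing Theorem~10 of that work. One caveat: your claim that the survivor set halves each round (which gives the $\lceil \log_2|\mathcal{A}|\rceil$ round bound and the $|\mathcal{A}|/2^{\ell-1}$ factor in your sample count) needs the algorithm to keep exactly the $\lceil |\mathcal{A}'|/2\rceil$ empirically best arms, with ties broken arbitrarily, because with Bernoulli rewards the literal rule ``discard arms with $\bar a < m$'' may discard nothing (e.g., when every arm has value $1$), and your correctness argument survives this fix unchanged: if $a^*$ is eliminated, at least half of $S_\ell$ survive and all of them satisfy $\hat r(a) \ge \hat r(a^*)$, so they cannot all be bad outside the Markov event, and your non-strict Hoeffding event already covers ties.
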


%For multiple players, one has to define how the outcome depends on the joint behaviour of all agents. In our case, we can focus on zero-sum objectives as learning the optimal strategy in two-player zero sum matrix games is an active area of research. Here, we consider the case of bandit-feedback, that is, after both players independently choose their arms, they only receive a Bernoulli reward of their joint action.

\subsection{Algorithm}

This section presents a pair of algorithms for PAC-RL of TBSGs with finite-horizon reachability objectives. First, we give an overview of the main techniques used in the algorithm. We then provide a more detailed description. The pseudocode of the algorithm is provided in \Cref{algorithm:main}. The correctness of the algorithm is proven in the next subsection.\\
\textbf{Algorithm Overview.} Firstly, the algorithm \emph{extends the set of states to state-steps}, i.e., it learns which action is good enough for every state-step pair, where step is bounded by the horizon.
The algorithm keeps track of \emph{unexplored state-steps}.
Initially, all state-steps are considered unexplored except for the target set. The set of unexplored state-steps shrinks over time, and it is \emph{treated as a target} to enhance exploration.
In the learning process, the algorithm learns how to visit more state-steps and identifies good-enough actions for each of them.
The procedure follows in stages.
In each stage, the algorithm constructs a new strategy by \emph{backward induction}.
For each state-step, it uses a \emph{best arm identification routine}, which proposes a local $\varepsilon$-optimal action with high confidence.
The algorithm always learns \emph{one step at a time}, fixing the strategy in the rest of the game.
At the end of the induction, a candidate strategy is constructed.
This strategy is then used to discover new state-steps.
If no new state-steps are discovered from the perspective of the player, this player proposes to the simulator to terminate the algorithm with the recently constructed strategy.
The procedure terminates only when both players propose to the simulator to terminate.

% \textcolor{red}{\sout{
% While the setting is uncoupled (the current state of the game is not publicly known), the algorithms are \emph{synchronised} in the sense that both players know the current stage and the current depth of the induction.}}

We now define some notions used in our algorithm.
\begin{definition}[Expanded Game]
For a given TBSG $\GG = (\S, \A, \Prob, \mu)$ with a target set $\T \subseteq \S$ and a time horizon $L \in \NN$, we define the expanded game $\GG' \defas (\S', \A, \Prob', \mu')$ where 
\begin{itemize}
    \item  $\S' \defas \S_\maxp' \uplus \S_\minp'$ where $\S_i'\defas \{(s, \ell) \colon s \in \S_i, \ell \in [L]\}$ for $i \in \{\maxp, \minp\}$;
    \item For all states $s, s' \in \S$, actions $a \in \A$ and steps $\ell, \ell' \in [L]$, the transition function $\Prob'$ is defined as
    \[
        \Prob'((s, \ell), a)(s', \ell') \defas \begin{cases}
            \Prob(s, a)(s') & \text{if } \ell \in [L-1] \land \ell' = \ell+1\\
            1 & \text{if } \ell = L \land \ell' = L \land s' = s\\
            0 & \text{otherwise;}
        \end{cases}
    \]
    % \
    % $\Prob: \S' \times \A \rightarrow \Delta(\S')$ is defined for every $s, t \in \S, a \in \A, \ell \in [L-1]$ as $ \Prob'((s, \ell), a)(t, \ell+1) \defas \Prob(s, a)(t)$ and $\Prob'((s, L), a)(t, L) \defas \mathbb{1}(s = t)$;
    \item For all states $s \in \S$ and steps $\ell \in [L]$, the initial distribution~$\mu'$ is defined as
    \[
        \mu'(s, \ell) \defas \begin{cases}
            \mu(s) & \text{if } \ell = 0\\
            0 & \text{otherwise.}
        \end{cases}
    \]
\end{itemize}
The target set is defined as $\T_{\maxp} \defas \T \times [L]$.
We also define a target set for the player $\minp$ as $\T_{\minp} \defas \{(s , L) ~|~ s \not\in \T\}$.
\end{definition}

The expanded game is the original game accompanied by a counter. A play starts with the counter value of 0, and in every step the counter is incremented. The counter is bounded by $L$, which means the state stays invariant after $L$ steps. Consequently, the expanded game is equivalent to the original game for the finite-horizon $L$. Moreover, the fact that the state is not changed after $L$ steps implies that the finite-horizon variant has the same value as the infinite-horizon for the expanded game. Thus, we obtain the following result.

\begin{proposition}
\label{pro:expanded}
For a given TBSG $\GG = (\S, \A, \Prob, \mu)$ with a target set $\T \subseteq \S$ and a time horizon $L \in \NN$, we have
$\Val_{R_L(\T)}^{\GG} = \Val_{R_L(\T_\maxp)}^{\GG'} = \Val_{R(\T_\maxp)}^{\GG'}$.
\end{proposition}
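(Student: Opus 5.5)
The plan is to prove the two equalities separately by exhibiting a measure-preserving correspondence between plays of $\GG$ and plays of $\GG'$, and then to use the absorbing structure at level $L$.

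\textbf{First equality.} Define the projection $\phi$ from plays of $\GG'$ to plays of $\GG$ by dropping the step counter on the first $L$ positions: $\phi\bigl((s_0,0),a_0,(s_1,1),\dots\bigr)$ agrees with $(s_0,a_0,s_1,\dots)$ up to index $L$. By construction of $\Prob'$ and $\mu'$, the counter of the $\ell$-th state is deterministically $\ell$ for $\ell\le L$, so histories of $\GG'$ of length $\le L$ are in bijection with histories of $\GG$ of length $\le L$.

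Every strategy in $\GG$ induces a strategy in $\GG'$ on these histories by ignoring counters (counters are recoverable from history length). Conversely, every strategy in $\GG'$ restricted to histories of length $<L$ gives a strategy in $\GG$, extended arbitrarily afterwards.

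Under corresponding profiles, the cone probabilities of histories of length $\le L$ coincide. This is a direct induction on length using $\Prob'((s,\ell),a)(s',\ell+1)=\Prob(s,a)(s')$ for $\ell<L$. Since $\Reach_L(\T)$ is a finite union of cones of length-$\le L$ histories, and $(s_\ell,\ell)\in\T_\maxp$ iff $s_\ell\in\T$, we get $\PP^{\pi}_\mu(\Reach_L(\T))=\PP^{\pi'}_{\mu'}(\Reach_L(\T_\maxp))$ for corresponding profiles. Behaviour after step $L$ is irrelevant to both events.

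The correspondence is surjective in both directions modulo this irrelevant behaviour. Hence the $\sup\inf$ expressions agree, because each player's set of achievable "truncated" strategies is the same on both sides.

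\textbf{Second equality.} I would show that in $\GG'$, for every strategy profile, $\Reach(\T_\maxp)=\Reach_L(\T_\maxp)$ up to a null set; in fact the two sets are equal on plays.

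Every transition from level $L$ returns to the same $(s,L)$, so any play reaching level $L$ stays forever at the single state $(s_L,L)$. Suppose a play visits $\T_\maxp$ at some index $n>L$. Then $(s_n,n')=(s_L,L)$, so $(s_L,L)\in\T_\maxp$ already at index $L$. Thus the two events coincide, their probabilities coincide for all profiles, and the values are equal.

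\textbf{Main obstacle.} The only delicate point is the strategy correspondence in the first equality. A strategy of $\GG$ sees states without counters, and a strategy of $\GG'$ sees states with counters. The resolution is that the counter is a deterministic function of history length, so the translation is a mere relabelling. I would state this as a small lemma, noting that it holds for general history-dependent (possibly randomized) strategies, and then take $\sup\inf$ on both sides.
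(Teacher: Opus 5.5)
Your proposal is correct and takes essentially the same approach as the paper. The paper justifies the proposition only informally: the counter is deterministic, so $\GG'$ coincides with $\GG$ up to horizon $L$, and the absorbing level $L$ makes $\Reach_L(\T_\maxp)$ and $\Reach(\T_\maxp)$ the same event. Your history bijection, cone-probability induction and truncated-strategy $\sup\inf$ argument make that first step rigorous.
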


\begin{remark}
\label{rem:positional}
Recall that positional strategies are as powerful as general strategies for TBSGs with reachability objectives. Thus, by \Cref{pro:expanded}, we only consider positional strategies in the expanded game, and we need to consider Markovian strategies in the original game.
\end{remark}

\begin{definition}
Given an expanded TBSG game $\GG = (\S', \A, \Prob', \mu')$ and two disjoint sets $U, V \subseteq \S'$, we define $\safereach(U, V) \defas \{\omega \in (\S' \times \A)^* \times \S': \exists \ell \geq 0, s_\ell \in V, \forall j \in [\ell-1]: s_j \not\in U\}$. Intuitively, $\safereach(U, V)$ is the set of finite plays that avoid reaching any state from $U$ until a state from $V$ is reached.
%\textcolor{blue}{I changed 'SafeReach' to 'NotUntil'}
\end{definition}

In the algorithm, we use constants which we define below.

\begin{definition}[Constants]
\label{def:alg-constants}
Let $\GG = (\S, \A, \Prob, \mu)$ be a TBSG with finite-horizon $L$ and $p, \eps \in [0, 1]$ be the confidence and error of the PAC guarantees. We define the constants used in the algorithm as follows.
\begin{itemize}
    \item $\eest \defas \frac{\varepsilon}{8|\S|L}$;
    \item $\eerr \defas \frac{\varepsilon}{2L}$;
    %\item $K \defas  \frac{32C|\S|L^3 |\A|\log(|\S|^2L^2/p)}{\varepsilon^3}$
    \item $C$ is the constant given by the best arm identification algorithm constant that is implicitly present in the $O$-notation~\cite{DBLP:journals/jmlr/Even-DarMM06}[Theorem 10]; and
    \item $K \defas \frac{C |\A| \log(|\S|^2 L^2/p)}{\eest \eerr^2}$.
    %\item $\eimp = \frac{\varepsilon}{L}$
\end{itemize}
\end{definition}

We are now able to explain the algorithms in detail.\\
\textbf{Algorithm Details. }
Pseudocode of the algorithms is given in \Cref{algorithm:main}. We describe the algorithm for player $i \in \{\maxp, \minp\}$. The algorithm starts by initialising the set of unexplored state-steps $U_i^0$ to be any state owned by player $i$ which is not in the last step (Line~\ref{line:initun}) and the strategies $\pi_i^0, \ldots, \pi_i^{|\S'|}$ (Line~\ref{line:initsigma}). A strategy $\pi_i^q$ is constructed in the stage $q$ using backward induction (Line~\ref{line:changesigma}) and it is used in all of the following stages $q+1, q+2, \ldots$ for the purpose of exploration (Line~\ref{line:sigma}).
After the initialisation, the algorithm runs at most $|\S'|$ stages (Line~\ref{line:q}) and in each stage it performs backward induction on the length of the game $L$ (Line~\ref{line:length}). In a stage $q$, after it performs an induction, it checks whether the set of unexplored state-steps has shrinked or not (Line~\ref{line:conditiontermination}). If not, it means the strategy learnt in the stage $q-1$ did not explore anything new, which makes it a good candidate for an $\varepsilon$-optimal strategy. However, this holds only if the set of unexplored states is unchanged for both players in the same stage, which results in the termination of the procedure.
The backward induction is split into a sampling phase (Line~\ref{line:initbai} to Line~\ref{line:count}) and an analysis phase (Line~\ref{line:s} to Line~\ref{line:changesigma}). Let the stage be $q$ and the level of induction be $\ell$.
In the sampling phase, the algorithm learns the best action for a state-step $(s, \ell)$ where $s \in \S_i$.
To achieve this, the algorithm uses formerly constructed strategies $\pi_i^{0}, \ldots, \pi_i^{q-1}$ in the first $\ell - 1$ steps of the game.
In the step $\ell$ it plays according to a best-arm identification routine that tries various actions to determine the best one with high confidence.
In the steps $\ell +1 $ onwards, it plays according to the currently learnt strategy $\pi_i^q$ which is being inductively constructed (Line~\ref{line:strategy}).
The algorithm samples $qK$ plays, that is, $K$ plays for every formerly constructed strategy $\pi_i^{r}$ for $r \in [q-1]$ (Line~\ref{line:sample}). The algorithm tracks whether player $i$ is successful in a particular play (Line~\ref{line:result}). For player $\maxp$, this happens when the play reaches a target state or an unexplored state-step. For player $\minp$, it happens either when the target states are completely avoided, or when an unexplored state-step is reached before a target state is reached. For every state $s$, the algorithm keeps track of how many times the best arm identification routine was called for that state (Line~\ref{line:count}).
In the analysis phase of the induction, the algorithm changes the strategy $\pi_i^q$ to play according to the result of the best-arm identification routine for the step $\ell$ in a state $s$ (Line~\ref{line:changesigma}). This happens only if the state was visited a sufficient number of times (Line~\ref{line:condition2}).
Furthermore, if a state $s$ was visited even higher number of times (Line~\ref{line:condition1}), the state-step $(s, \ell)$ is removed from the set of unexplored state-steps (Line~\ref{line:remove}).\\
\textbf{Comparison with existing work.} 
In the algorithm design, we drew inspiration from \cite{DBLP:conf/colt/DaskalakisGZ23}. However, our work differs significantly from theirs.
First, we consider turn-based games with reachability objectives  where PAC-RL guarantees are impossible in general, while they consider concurrent discounted-sum games which are easy in PAC learning.
Second, they use an adversarial bandit learning routine, while we use a best-arm identification routine.
Third, our approach removes the need to estimate the visitation distribution, it is simpler in general, and mainly, the complexity of our algorithm is more efficient than theirs.

\begin{algorithm}
\SetAlgoLined
\LinesNumbered
\KwData{$\S_i, |\S|, \A,p,  \eps, L$} $\unexplored^0_i \gets \S_i \times [L-1]$ \tcp*[r]{set of unexplored states}\label{line:initun}

%$\Pi \gets [\sigma^{\text{uniform}}]$ ordered list      \tcp*[r]{initialize strategy cover}
$\pi^q_i \gets \pi^{\text{uniform}}$ for all $q \in 0, 1, \ldots, |\S'|$\label{line:initsigma} \tcp*[r]{memoryless uniform strategy}
$\counted_i^{q}(s, \ell) \gets 0$ for all $q \in [|\S'|], (s, \ell) \in \S'$\label{line:initcounter} \tcp*[r]{visit counter of $(s, \ell)$ in stage $q$}
%$\wellexp^{q, \ell} \gets \emptyset$ for all $q \in [|\S|L], \ell \in [L]$\;
\For{$q \in 1, \ldots, |\S'|$}{\label{line:q}  
    $\unexplored^{q}_i \gets \unexplored^{q-1}_i$\label{line:previousun}\;       
    \For {$\ell \in L-1, \ldots, 1$} {\label{line:length}
        Initialise a best-arm identification  (BAI) routine for all $s \in \S_i$ with $(\eerr, \frac{p}{|\S^2|L^2})$ parameters \label{line:initbai}\;
        %$\pi^{BAI}_i(s) \gets $BAI$_s.$get-arm$()$ for all $s \in \S_i$ \label{line:firstbai}\;                
        %$\counted^q_i(s, l, a) \gets 0$ for every $a \in A$\;
        %$Wins^q_i(s, l, a) \gets 0$ for every $a \in A$\;
        \For {$\pi_i \in \pi_i^0, \ldots, \pi_i^{q-1}$} {\label{line:sigma}
            \For {$k \in 1, \ldots, K$} {\label{line:k}
                Let $\pi_i^{BAI}(s)$ be the action that the BAI routine wants to sample at the state $s$, for all $s \in \S_i$\;
                Define a positional strategy for any $s \in \S_i, j \in [L]$ $\pi_i'(s, j) \gets
                \begin{cases} \pi_i(s, j) &\text{if the step } j < \ell \\
                \pi_i^{\text{BAI}}(s) &\text{if the step } j = \ell \\
                \pi_i^q(s, j) &\text{if the step } j > \ell
                \end{cases}$\label{line:strategy}\;
                Interact with the simulator $\MM$ to sample a play $(s_1, a_1, s_2, a_2, \ldots, s_{J-1}, a_{J-1}, s_J)$ using the strategy $\pi_i'$. Notice that $J < L$ only if $s_J \in \T$\label{line:sample}\; 
                \If{$J > \ell$ and $s_\ell \in \S_i$}{\label{line:condition0}
                    $Play \gets ((s_{\ell+1}, \ell+1), a_{\ell+1}, \ldots, (s_J, J))$\label{line:play}\;
                    $Winning \gets
                    \begin{cases}
                    \safereach(\emptyset, U_\maxp^{q} \cup \T_\maxp)&\text{ if } i = \maxp\\
                    \safereach(\T_\maxp, U_\minp^{q} \cup \T_\minp)&\text{ if } i = \minp
                    \end{cases}$\label{line:winning}\;
                    $Result \gets \mathbb{1}(Play \in Winning)$\label{line:result}\;
                    Update the BAI routine at the state $s_\ell$ with $(a_\ell, Result)$ \label{line:bai} \tcp*[r]{Note that $a_\ell$ was requested by $\pi_i^{BAI}(s_\ell)$}
                    %$\pi_i^{\text{BAI}}(s_{\ell}) \gets \updatebai(s_{\ell}, a_{\ell}, Result)$\label{line:bai}\;
                    Increment $\counted_i^{q}(s_{\ell}, \ell)$\label{line:count}\;
                }
            }
        }         
        \For {$s \in \S_i$} {\label{line:s}
            \If {$\counted_i^{q}(s, \ell) \geq 3K\eest$} {\label{line:condition1}
                $\unexplored_i^{q} \gets \unexplored_i^{q} / \{(s, \ell)\}$\label{line:remove}\;
            }
            \If {$\counted_i^{q}(s, \ell) \geq K\eest $} {\label{line:condition2}
                Make $\pi_i^q$ to play the action suggested by BAI at the state $(s, \ell)$\label{line:changesigma}\;
            }
        }
    }
    %Add $\sigma^{q}$ to $\Pi$\label{line:add}\;
    \If {$\unexplored_i^{q} = \unexplored_i^{q-1}$}  {\label{line:conditiontermination}
        Call $\MM.propose(\pi_i^{q-1})$\label{line:termination}\;
    }
}
%Return $\sigma^{SL}$.
\caption{Algorithm $\ALG_i$ for player $i \in \{\maxp, \minp\}$}
\label{algorithm:main}
\end{algorithm}

\subsection{Proof of \texorpdfstring{\Cref{the:main}}{Theorem~15}}
\textbf{Overview of the proof.} We proceed as follows:
(a) we prove that the learning simulation terminates (\Cref{lem:termination});
(b) we define some notations that we use in the next steps (\Cref{def:qvalue});
(c) we define an event (\Cref{def:events}) that occurs with high probability (\Cref{lem:event}); and
(d)~we show that under this event, the strategy profile proposed by the algorithms is $\varepsilon$-optimal for the finite-horizon reachability game (\Cref{lem:unexplored,lem:constructed,lem:real}).

\begin{lemma}
\label{lem:termination}
    The learning simulation of algorithms $(\ALG_\maxp, \ALG_\minp)$ terminates after at most $|\S'|$~stages. 
\end{lemma}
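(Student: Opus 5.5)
The claim is essentially structural, so the plan is to read it off the control flow of \Cref{algorithm:main}: the outer loop at Line~\ref{line:q} ranges over $q \in \{1, \ldots, |\S'|\}$, and each algorithm $\ALG_i$ runs at most $|\S'|$ stages. What must be shown is that, by the end of these stages, both players have called $\MM.propose$ at least once. The simulator then terminates, since by \Cref{def:simulator} it stops once both players have proposed.

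The key step is a pigeonhole argument on the unexplored sets. By Line~\ref{line:previousun} and Line~\ref{line:remove}, the sets $\unexplored_i^{q}$ are monotonically non-increasing: $\unexplored_i^{q} \subseteq \unexplored_i^{q-1}$. Moreover $|\unexplored_i^0| = |\S_i \times [L-1]| \le |\S'| - 1$. A stage $q$ in which player $i$ does not propose must satisfy $\unexplored_i^{q} \subsetneq \unexplored_i^{q-1}$, so each such stage removes at least one element. Hence there are at most $|\unexplored_i^0| < |\S'|$ non-proposing stages. Among the $|\S'|$ stages, at least one is therefore a stage $q$ with $\unexplored_i^{q} = \unexplored_i^{q-1}$, and in it player $i$ executes Line~\ref{line:termination}. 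Applying this to both $i = \maxp$ and $i = \minp$ shows that each player proposes within $|\S'|$ stages.

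The main obstacle is synchronization, which matters because the players are decoupled and informed only about who is active. I would argue that both algorithms proceed in lockstep through the stages, since every loop in a stage is driven by shared plays. For each $(q, \ell)$, both players iterate over $r \in [q-1]$ and $k \in [K]$ and sample exactly one play per iteration. A new play starts only when the simulator announces it (after a target hit, or after the play is cut at length $L$), and both players observe that announcement. Each player can therefore count plays identically, which keeps their $(q, \ell, r, k)$ indices aligned.

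I would also interpret ``terminates'' as the simulator halting once both have proposed. Once a player has proposed, it keeps playing its remaining stages (or idles), and the simulator halts at the latest after stage $|\S'|$ of the slower player. I expect the subtle point to be making precise that a proposal persists, so that it need not occur in the same stage for both players. If the simulator instead requires simultaneous proposals, I would strengthen the pigeonhole argument to the joint stage count.
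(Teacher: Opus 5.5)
\textbf{There is a gap: your argument proves termination under a reading the paper does not use.} Your per-player pigeonhole is correct as far as it goes, but it relies on proposals ``persisting'' across stages. The step the paper actually needs is only gestured at in your last sentence.

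In the paper, the simulation terminates at a stage $q$ in which \emph{both} $U^q_\maxp = U^{q-1}_\maxp$ and $U^q_\minp = U^{q-1}_\minp$ hold. Both players then propose $\pi^{q-1}_\maxp$ and $\pi^{q-1}_\minp$ from the same stage, and the rest of the analysis depends on this:
\begin{itemize}
\item $f$ is a single common stage;
\item \Cref{lem:unexplored} uses $U^f = U^{f+1}$ for the union $U^f_\maxp \cup U^f_\minp$;
\item event $E$(c) and \Cref{lem:constructed} concern the joint profile $\pi^f = (\pi^f_\maxp, \pi^f_\minp)$.
\end{itemize}
Under your persistent-proposal reading, the output could be $(\pi^{q_1-1}_\maxp, \pi^{q_2-1}_\minp)$ with $q_1 \neq q_2$, and none of the later lemmas say anything about such a profile.

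For simultaneous proposals, your per-player count alone is not enough. It does not exclude, for instance, $U_\maxp$ shrinking exactly in odd stages and $U_\minp$ exactly in even stages. Then each player proposes in about half the stages, but never in the same stage as the other.

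\textbf{The missing step is the joint count, which is the paper's entire proof.} In every stage where the simulation does not terminate, at least one element is removed from $U_\maxp \cup U_\minp$. Since $\S_\maxp \cap \S_\minp = \emptyset$, the sets $U^0_\maxp$ and $U^0_\minp$ are disjoint subsets of $\S \times [L-1]$. Hence $|U^0_\maxp| + |U^0_\minp| = |\S| L = |\S'| - |\S| < |\S'|$. So at most $|\S'|-1$ stages can be non-terminating, and some stage among the first $|\S'|$ leaves both sets unchanged.

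Your per-player bound $|U^0_i| \le |\S'| - 1$ does not give this: summing it over the two players only yields $2|\S'|-2$. The disjointness of the two sets is what makes the joint count fit within $|\S'|$ stages.

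Your remarks on synchronization are a sound addition that the paper leaves implicit. Both players can count plays and steps from the simulator's announcements of new plays and of the active player.
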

\begin{proof}
    At the end of every stage $q$, if $\unexplored^q_i = \unexplored^{q-1}_i$, then the algorithm $\ALG_i$ proposes $\pi_i^{q-1}$ as the candidate strategy. The learning simulation terminates (Line~\ref{line:termination}) if both algorithms propose a candidate strategy. Otherwise, at least one state-step has to be removed either from $\unexplored^q_\maxp$ or $\unexplored^q_\minp$. However, this cannot happen more than $|\S'|$ times as $|\unexplored^0_\maxp \cup \unexplored^0_\minp| \leq |\S'|$ which means the simulation terminates after at most $|\S'|$ stages. 
\end{proof}

\begin{definition}
    We define $f \in \{1, \ldots, |\S'|\}$ as the stage $q-1$, i.e., the second-to-last stage before termination.
\end{definition}
%In \Cref{def:qvalue}, we define some useful notions we use throughout the section. 
%\Cref{lem:connection} connects these notions with the values computed by the algorithm. 
%The main definition of the section is the event $E$ \Cref{def:events}, which is a collection of random events sufficient to prove \Cref{the:main}. In \Cref{lem:event}, we prove that the probability of event $E$ is $1 - p$. 
%Optimality of the strategies outputed by the algorithm is proved in \Cref{lem:real}. But first, we prove optimality of the outputed strategies with respected to a constructed game with different reachability objectives (\Cref{lem:constructed}). Though this constructed game is not zero-sum, the difference between the values of the constructed game and the real game is marginal as we prove in \Cref{lem:unexplored}.
%The whole proof consists of moving between the real game and a constructed games
\textbf{Notions of $V$ and $Q$ values.} We define the value $V_{B, W}^\pi(s, \ell)$ which is the probability of reaching a state-step from $W \subseteq \S'$ while avoiding $B \subseteq \S'$, starting in a state-step $(s, \ell)$ and playing according to the expanded game $\GG'$ and a strategy profile $\pi$. The value $Q_{B, W}^\pi(s, \ell, a)$ is defined similarly to $V_{B, W}^\pi(s, \ell)$ but the first action taken is $a$.

\begin{definition}\label{def:qvalue}
Let $\GG' = (\S', \A, \Prob', \mu')$ be an expanded game for an original game $\GG = (\S, \A, \Prob, \mu)$. Let $B, W \subseteq \S'$ be two disjoint subsets and $\pi$ a positional strategy profile for the expanded game.
For all states $s \in \S$ and steps $\ell \in [L]$, we inductively define
\begin{align*}
V_{B, W}^{\pi}(s, \ell) \defas 
    \begin{cases}
    1 &\text{ if } (s, \ell) \in W\\
    0 &\text{ if } (s, \ell) \in B \text{ or } (s, \ell) = (s, L) \not\in W \\
    Q_{B, W}^{\pi}(s, \ell, \pi(s, \ell)) &\text{ otherwise}
    \end{cases}
\end{align*}
and for all states $s \in \S$, steps $\ell \in [L-1]$, and actions $a \in \A$, we have
\begin{align*}
Q_{B, W}^{\pi}(s, \ell, a) \defas \sum_{s' \in \S} \Prob(s, a)(s') V_{B, W}^{\pi}(s', \ell+1)\,.
\end{align*}
We also denote $V_{B, W}^{\pi} \defas \sum_{s \in \S}\mu(s)V_{B, W}^{\pi}(s, 0)$.

\Cref{lem:connection} connects these notions with the values computed by the algorithms.
%We postpone the proof of the following Lemma to the appendix.

\begin{restatable}{proposition}{backwardinduction}
\label{lem:connection}
For all positional strategy profiles $\pi$ for the expanded game, all states $s \in \S$, and all sets $B, W \subseteq \S'$, the following statements hold.
\begin{itemize}
    \item For all steps $\ell \in [L]$, we have
    \begin{align*}
        &V_{B, W}^{\pi}(s, \ell)
        = \PP_{(s, \ell)}^{\pi} \left( \left (s_{\ell}, a_{\ell}, \ldots, s_{L} \right ) \in \safereach \left (B,W \right ) \right )\,;
    \end{align*}
    
    \item for all steps $\ell \in [L-1]$ and actions $a \in \A$, we have
    \begin{align*}
        &Q_{B, W}^{\pi}(s, \ell, a)
        = \PP_{\Prob'((s, \ell), a)}^{\pi} \left( \left (s_{\ell+1}, a_{\ell+1}, \ldots, s_{L} \right ) \in \safereach \left (B,W \right ) \right )\,.
    \end{align*}
\end{itemize}
\end{restatable}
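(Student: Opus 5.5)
The plan is a backward induction on the step $\ell$, from $\ell = L$ down to $\ell = 0$. Throughout, $\pi$ is a positional profile of the expanded game. Under $\pi$ and the expanded transition function $\Prob'$, the process $((s_j, j))_{j \ge \ell}$ started at $(s,\ell)$ is a Markov chain. Its step component increases deterministically by one until it reaches $L$. So every play from $(s,\ell)$ has the form $(s_\ell,\ell), a_\ell, (s_{\ell+1},\ell+1), \ldots, (s_L, L)$, and membership of the finite prefix up to $(s_L,L)$ in $\safereach(B,W)$ is a well-defined event. I take $B$ and $W$ disjoint, as in \Cref{def:qvalue}, so the recursion is unambiguous.

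\textbf{Base case $\ell = L$.} There are three cases, matching the recursion.
\begin{itemize}
\item If $(s,L)\in W$, then the index $0$ witnesses $\safereach(B,W)$ with a vacuous avoidance condition, so the probability is $1$.
\item If $(s,L)\in B$, the prefix consists only of $(s,L)$, which is not in $W$ by disjointness, so the probability is $0$.
\item If $(s,L)\notin W$ and $(s,L)\notin B$, then $V^\pi_{B,W}(s,L)=0$ by definition, and the single-state prefix never meets $W$, so the probability is again $0$.
\end{itemize}

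\textbf{Inductive step for $\ell<L$.} Assume the $V$ identity holds at step $\ell+1$ for all states. The $Q$ identity at step $\ell$ then follows by conditioning on the first successor. The initial distribution $\Prob'((s,\ell),a)$ puts mass $\Prob(s,a)(s')$ on $(s',\ell+1)$, so the law of total probability gives
\[
\sum_{s'}\Prob(s,a)(s')\,\PP^\pi_{(s',\ell+1)}\bigl((s_{\ell+1},\ldots,s_L)\in\safereach(B,W)\bigr)=\sum_{s'}\Prob(s,a)(s')\,V^\pi_{B,W}(s',\ell+1)=Q^\pi_{B,W}(s,\ell,a).
\]

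For the $V$ identity at step $\ell$, I split into the same three cases.
\begin{itemize}
\item If $(s,\ell)\in W$, the witness is index $0$, so the probability is $1$.
\item If $(s,\ell)\in B$, then $(s,\ell)\notin W$, and any later witness $j\ge1$ would require $s_0=(s,\ell)\notin B$, which fails. So the probability is $0$.
\item Otherwise $(s,\ell)$ is in neither set, and a prefix lies in $\safereach(B,W)$ if and only if its suffix starting at step $\ell+1$ does. The witness must occur at some $j\ge1$, and the constraint that index $0$ avoids $B$ is already satisfied. Positionality fixes the first action $a=\pi(s,\ell)$, and the Markov property lets me condition on $a_\ell$. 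This gives $\PP^\pi_{(s,\ell)}(\cdot)=\PP^\pi_{\Prob'((s,\ell),a)}(\cdot)=Q^\pi_{B,W}(s,\ell,\pi(s,\ell))$ by the $Q$ identity just proved.
\end{itemize}

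\textbf{Main obstacle.} The only delicate point is the index bookkeeping in the shift step. The definition of $\safereach$ is written for plays indexed from $0$ and uses the range $j\in[\ell-1]$. I need to check that re-indexing the suffix from $\ell+1$ preserves the ``avoid $B$ strictly before the witness'' clause. This holds because $(s,\ell)\notin B$ is exactly the one extra condition dropped when passing from the full prefix to the suffix. It is routine but needs care. For a randomized positional $\pi$ one would instead average over $\pi(s,\ell)$, but the paper's $Q^\pi(s,\ell,\pi(s,\ell))$ notation indicates pure strategies.
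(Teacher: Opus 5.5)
Your proposal is correct and follows the paper's proof essentially step for step. Both argue by backward induction from $\ell = L$: the $Q$ identity at step $\ell$ comes from the $V$ identity at step $\ell+1$ via the law of total probability, and the $V$ identity at step $\ell$ then follows by the three cases $(s,\ell)\in B$, $(s,\ell)\in W$, and neither. Your explicit remarks on the disjointness of $B$ and $W$ and on re-indexing $\safereach$ for the suffix only spell out what the paper leaves implicit.
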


\begin{proof}[Proof Sketch.]
The proof is by backward induction on $\ell$. At the last step , the claim is immediate from the definition. For $\ell < L$, by the induction hypothesis and the law of total probability, the quantity $Q_{B, W}^{\pi}(s, \ell, a) = \sum_{s' \in \S} \Prob(s, a)(s') V_{B, W}^{\pi}(s', \ell+1)$ is the probability of satisfying $\safereach(B,W)$ after taking action $a$. Then $V^\pi_{B,W}(s,\ell)$ follows directly because it is $0$ on $B$, $1$ on $W$, and otherwise equals $Q^\pi_{B,W}(s,\ell,\pi(s,\ell))$. Thus both recursive definitions coincide with the reach-avoid probabilities.
\end{proof}

\end{definition}
\textbf{Event $E$.} We define an event $E$ which is a collection of conditions. To do so, we first define some useful notations.

\begin{definition}
We define $\counted^{q}(s, \ell) \defas \sum_{i \in \{\maxp, \minp\}}\counted_i^{q}(s, \ell)$ and $\unexplored^q \defas \unexplored_\maxp^q \cup \unexplored_\minp^q$ for all $q \in [f], s \in \S, \ell \in [L].$ Intuitively, $\counted^q(s, \ell)$ is the number of times the pair $(s, \ell)$ is visited in the stage $q$. By $\unexplored^q$, we denote all unexplored state-step pairs.
\end{definition}

%Notice that after running the algorithm, for all $q \in [f]$, $(s, \ell) \in \S'$, at least one of the values $C_\maxp^q(s, \ell)$ and $C_\minp^q(s, \ell)$ is zero. This is due to the fact that $C_i^q(s, \ell)$ is incremented only if the state $s$ belongs to the player $i$.

\begin{definition}\label{def:events}
After both algorithms terminate, we say an event $E$ happens if for all stages $q \in [f]$ and every state $(s, \ell) \in \S'$, all of the following conditions hold:
\begin{enumerate}[(a)]
    \item 
    \begin{equation}\label{eq:probability}
           \left|\counted^{q}(s, \ell) -  K\sum_{r \in [q-1]} V_{\emptyset, \{(s, \ell)\}}^{\pi^r}
           \right| \leq K\eest;
           %\PP_{\mu}^{\sigma^r, \tau^r}(\omega \in \Reach(\{(s, \ell)\})
        \end{equation}
    \item if $\counted^{q}(s, \ell) \geq 3K\eest$ then $\counted^{q'}(s, \ell) \geq K\eest$ for all stages $q' \in  \{q, \ldots, f\}$; and
    \item for all $a \in \A$, if $s \in \S_\maxp$ then 
    \begin{align*}
    Q_{\emptyset, \unexplored_\maxp^{q} \cup \T_\maxp}^{\pi^q}(s, \ell, a) - Q_{\emptyset, \unexplored_\maxp^{q} \cup \T_\maxp}^{\pi^q}(s, \ell, \pi_\maxp^q(s, \ell))< \eerr
    \end{align*}
    and if $s \in \S_\minp$ then 
    \begin{align*}
    Q_{\T_\maxp, \unexplored_\minp^{q} \cup \T_\minp}^{\pi^q}(s, \ell, a) - Q_{\T_\maxp, \unexplored_\minp^{q} \cup \T_\minp}^{\pi^q}(s, \ell, \pi_\minp^q(s, \ell))< \eerr\,.
    \end{align*}
\end{enumerate}
\end{definition}

\begin{restatable}{lemma}{event}
\label{lem:event}
The probability of the event $E$ is at least $1-p$.
\end{restatable}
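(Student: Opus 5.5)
Since $E$ is the intersection of three families of conditions, (a), (b) and (c), indexed by stages $q \in [f]$ (at most $|\S'| = |\S|(L+1)$ of them) and state-steps $(s,\ell) \in \S'$, I will bound the failure probability of each individual condition and take a union bound. The budget is $p$ split over $O(|\S|^2L^2)$ pairs $(q,(s,\ell))$. This matches the BAI confidence parameter $p/(|\S|^2L^2)$ on Line~\ref{line:initbai} and the factor $\log(|\S|^2L^2/p)$ built into $K$. I will allot roughly $p/3$ to each of the three conditions, absorbing constants into $C$.

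\emph{Condition (a).} Fix $q$, $(s,\ell)$, and the level $\ell$ of the induction at which $\counted^q(s,\ell)$ is incremented. For each $r \in [q-1]$, the $K$ plays generated with $\pi_i^r$ in the first $\ell-1$ steps are independent given the history. The event of visiting $(s,\ell)$ depends only on the behaviour before step $\ell$. Along that prefix, both players follow their own $\pi^r$ components, because the synchronization of stages and induction levels is deterministic: each algorithm runs the same loop counts. Hence the visit probability is exactly $V^{\pi^r}_{\emptyset,\{(s,\ell)\}}$, by \Cref{lem:connection}.

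The count $\counted^q(s,\ell)$ is therefore a sum of $qK$ independent Bernoulli variables with the stated mean. I will use a multiplicative Chernoff bound, or Hoeffding applied with care. Plain Hoeffding gives a deviation of order $\sqrt{qK\log(\cdot)}$, and I need this to be at most $K\eest$, that is, $K \gtrsim q\log(\cdot)/\eest^2$. The given $K$ scales as $1/(\eest\eerr^2)$ rather than $1/\eest^2$, so I expect to verify that $\eerr^2 \le \eest$ up to constants (indeed $\eps^2/(4L^2)$ versus $\eps/(8|\S|L)$ holds when $\eps$ is small). If this fails, I will fall back to a Bernstein-type bound using the variance.

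\emph{Condition (b).} This follows from (a) together with the monotonicity of the strategy sets. Stage $q' \ge q$ samples with $\pi^0,\dots,\pi^{q'-1}$, which is a superset of the strategies used at stage $q$, so the expected counts satisfy $K\sum_{r<q'}V^{\pi^r} \ge K\sum_{r<q}V^{\pi^r}$. Applying (a) twice, $\counted^q \ge 3K\eest$ gives expectation at least $2K\eest$ at stage $q$, hence at stage $q'$, and so $\counted^{q'} \ge K\eest$. Thus (b) costs no extra probability.

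\emph{Condition (c).} This is the main obstacle. Consider first the case where $(s,\ell)$ was visited at least $K\eest$ times, so that $\pi^q$ is set by BAI. Conditioned on arriving at $(s,\ell)$, each BAI sample returns a Bernoulli reward whose mean equals the $Q$-value for the chosen action. This holds because the continuation uses the already-fixed $\pi_i^q$ at later steps for both players, and the winning set is exactly the $\safereach$ set defining $Q$ (\Cref{lem:connection}).

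Subtlety 1: the arrival times are random, but the rewards are conditionally i.i.d. given the arrival, so the standard BAI guarantee (\Cref{lem:bai}) still applies.

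Subtlety 2: the sample budget. \Cref{lem:bai} requires $C|\A|\log(|\S|^2L^2/p)/\eerr^2 = K\eest$ samples, which is exactly what the threshold supplies, so BAI succeeds with probability $1-p/(|\S|^2L^2)$.

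The remaining case is $\counted < K\eest$, where $\pi^q$ stays uniform and (c) need not hold. I expect to restrict (c) to state-steps meeting the threshold, or to argue that such state-steps carry negligible weight. I would first try restricting (c) to counted state-steps, since that is what the later lemmas plausibly use. A final union bound over all pairs $(q,(s,\ell))$ yields $\Pr(E) \ge 1-p$.
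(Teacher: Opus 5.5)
You handle (b) and (c) as the paper does. (b) follows deterministically from (a), because $\sum_{r\in[q-1]}V^{\pi^r}_{\emptyset,\{(s,\ell)\}}$ is nondecreasing in $q$. (c) is the Median Elimination guarantee once $\counted^q(s,\ell)\ge K\eest = C|\A|\log(|\S|^2L^2/p)/\eerr^2$. Restricting (c) to such counted state-steps is fine. By (b), every state-step removed from $\unexplored^f_i$ is counted at stage $f$, and that is all \Cref{lem:constructed} uses.

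The gap is in (a). You correctly observe that Hoeffding for $qK$ indicators needs $K\eest^2\gtrsim q\log(\cdot)$, but your check ``$\eerr^2\le\eest$'' drops the factor $q$. With the actual constants, $K\eest^2 = C|\A|\log(|\S|^2L^2/p)\,\eest/\eerr^2 = C|\A|L\log(|\S|^2L^2/p)/(2|\S|\eps)$. Since $q$ ranges up to $|\S'|=|\S|(L+1)$, you would need $\eps\lesssim C|\A|\log(|\S|^2L^2/p)/|\S|^2$, and $\eps\in(0,1)$ does not give this. A Bernstein bound does not rescue the two-sided additive form of (a). Suppose $q$ is of order $|\S|L$ and some state-step is visited with probability bounded away from $0$ and $1$ under the $\pi^r$. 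Then $\counted^q(s,\ell)$ has standard deviation of order $\sqrt{qK}$, which is at least $K\eest$ in exactly the regime of $\eps$ above. So the deviation $K\eest$ is exceeded with constant probability, and no concentration inequality proves (a) as stated with this $K$.

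The paper's proof does not close this hole either. It bounds the tail by $2\exp(-(K\eest)^2)$, whereas Hoeffding for a sum of $qK$ terms gives $2\exp(-2(K\eest)^2/(qK))$. Two repairs work.
\begin{enumerate}
\item Enlarge $K$ so that additionally $K\ge |\S'|\log(4|\S|^2L^2/p)/(2\eest^2)$. This keeps the complexity polynomial but changes the bound in \Cref{the:main}.
\item Replace (a) by the one-sided threshold statements that are actually used downstream. This is what your Bernstein fallback must turn into.
\begin{itemize}
\item If $K\sum_{r\in[q-1]}V^{\pi^r}_{\emptyset,\{(s,\ell)\}}\ge 4K\eest$, then $\counted^q(s,\ell)\ge 3K\eest$. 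This is all \Cref{lem:unexplored} needs.
\item If $\counted^q(s,\ell)\ge 3K\eest$, then that mean is at least $2K\eest$, hence $\counted^{q'}(s,\ell)\ge K\eest$ for all $q'\ge q$. This is (b).
\end{itemize}
Condition on the history before the sampling at level $\ell$ of stage $q$. The count is then a sum of independent indicators with fixed means. Each implication fails with probability $\exp(-\Omega(K\eest))$, by a multiplicative Chernoff bound or by Bernstein with variance at most the mean. Since $K\eest = 4C|\A|L^2\log(|\S|^2L^2/p)/\eps^2$, this is well below $p/(|\S|^2L^2)$. So the union bound goes through with the paper's $K$.
\end{enumerate}
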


\begin{proof}[Proof Sketch.]
    The event $E$ consists of three parts. It is enough to show that (a) and (c) hold with probability at least $1-p/2$, while (b) follows deterministically from (a). A union bound then gives the result.
    For part (a), fix a stage $q$ and a state-step $(s,\ell)$. The count $C_q(s,\ell)$ is exactly the total number of visits to $(s,\ell)$ in the $K$ samples taken for each earlier profile $\pi^r$. Hence it can be written as a sum of independent indicators whose expectation is $K \sum_{r \in [q-1]} V_{\emptyset, \{(s, \ell)\}}^{\pi^r}$.
    Applying Hoeffding’s inequality shows that $C_q(s,\ell)$ concentrates around this expectation within $K\varepsilon_{\mathrm{emp}}$, and taking a union bound over all stages, steps, and states gives probability at least $1-p/2$.
    For part (c), once $(s,\ell)\notin U^q_i$, parts (a) and (b) ensure that the best-arm identification routine at $(s,\ell)$ has been called enough for its $(\varepsilon_{\mathrm{bai}}, p/(|S|^2L^2))$ guarantee. Therefore, by \Cref{lem:connection}, the selected action is $\varepsilon_{\mathrm{bai}}$-optimal with the stated confidence. Agian, a union bound gives probability at least $1-p/2$.
\end{proof}
\textbf{Near-optimality of $\pi^f$.} In the following, we condition on the event $E$. \Cref{lem:unexplored} bounds the probability of reaching the set of state-steps $\unexplored^f$ under the strategy profile $\pi^f$.
%Later, we use this lemma to say that treating $\unexplored^f$ as a target does not change the value of the game by much.
\Cref{lem:constructed} states that $\pi^f$ is $L \eest$-optimal when treating $\unexplored^f$ as a target.
\Cref{lem:real} combines the two lemmas to show that the strategy profile $\pi^f$ is $\eps$-optimal in the expanded game with finite-horizon reachability objectives.

\begin{lemma}\label{lem:unexplored}
Under the event $E$, we have $V_{\emptyset, \unexplored^f}^{\pi^{f}} \leq 4|\S'|\eest$.
\end{lemma}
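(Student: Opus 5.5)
We want to bound the probability, under the final profile $\pi^f$, of ever reaching a still-unexplored state-step. The plan is to use the termination condition: at termination $U^{f+1}=U^f$. So no state-step in $U^f$ was removed during stage $f+1$, and every $(s,\ell)\in U^f$ satisfies $\counted^{f+1}(s,\ell) < 3K\eest$.

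The key steps, in order, are as follows.

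\textbf{First: bound the visit probability of each unexplored state-step.} In stage $f+1$ the algorithm samples $K$ plays for each earlier profile $\pi^0,\dots,\pi^f$, and in particular for $\pi^f$. Condition (a) of the event $E$ applied at stage $f+1$ gives
\[
K\sum_{r\in[f]} V^{\pi^r}_{\emptyset,\{(s,\ell)\}} \le \counted^{f+1}(s,\ell) + K\eest < 4K\eest .
\]
Since every summand is nonnegative, the term $r=f$ alone satisfies $V^{\pi^f}_{\emptyset,\{(s,\ell)\}} < 4\eest$. Two technical points must be checked here. One is that the index range of $E$ covers stage $f+1$, or else that the termination test can be read with respect to $\counted^{f}$; I would align the indexing with the definition of $f$. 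The other is that the counts at the level where $(s,\ell)$ is examined really come from plays generated by $\pi^r$ in the first $\ell$ steps. This holds because the sampled strategy coincides with $\pi^r$ on steps $j<\ell$, and the visit to $(s,\ell)$ depends only on those steps.

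\textbf{Second: union bound over unexplored state-steps.} The event of reaching $U^f$ is contained in the union, over $(s,\ell)\in U^f$, of the events of reaching $(s,\ell)$. Using \Cref{lem:connection} to identify $V$ with these probabilities, we get
\[
V^{\pi^f}_{\emptyset,U^f} \le \sum_{(s,\ell)\in U^f} V^{\pi^f}_{\emptyset,\{(s,\ell)\}} \le 4|\S'|\eest .
\]

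\textbf{Main obstacle.} The hard part is the bookkeeping in the first step. I need to confirm that the counter for $(s,\ell)$ accumulates visits from the reference profiles $\pi^r$ and not from the perturbed BAI-modified profiles. I also need to handle the asymmetry between players: $\counted^{q}(s,\ell)$ sums both players' counters, but only the owner of $s$ increments its counter, so the combined count is exactly the number of visits. Finally, I need to make sure the stage index used in condition (a) matches the termination stage.
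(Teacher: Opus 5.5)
Your proposal is correct and follows essentially the same argument as the paper: termination gives $\unexplored^{f+1}=\unexplored^f$, hence $\counted^{f+1}(s,\ell)<3K\eest$ for every $(s,\ell)\in\unexplored^f$; condition (a) of $E$ at stage $f+1$ together with nonnegativity of the summands yields $V^{\pi^f}_{\emptyset,\{(s,\ell)\}}\le 4\eest$; and a union bound over $\unexplored^f$ finishes the proof. The indexing issue you flag is genuine, since the paper also applies (a) at stage $f+1$ although $E$ is stated only for $q\in[f]$, but it is harmless because the concentration argument in \Cref{lem:event} takes a union bound over all stages up to $|\S'|$.
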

\begin{proof}
First, the termination condition implies that $\unexplored^f = \unexplored^{f+1}$.
For any $(s, \ell) \in \S'$, we have that $(s, \ell) \in \unexplored^{f}$ iff $(s, \ell) \in \unexplored^{f+1}$ if and only if $(s, \ell)$ has not been removed from $\unexplored_\maxp^{f+1}$ nor $\unexplored_\minp^{f+1}$. Therefore, for all $(s,\ell) \in \unexplored^f$, we have $\counted^{q}(s, \ell) < 3K\eest$ for all stages $q \in \{1, \ldots, f+1\}$.
Using the property $(a)$ of the event $E$ for the stage $f+1$, we get that
$\sum_{r \in [f]} V_{\emptyset, \{(s, \ell)\}}^{\pi^r} \leq 4\eest$
which implies in particular that $V_{\emptyset, \{(s, \ell)\}}^{\pi^{f}} \leq 4 \eest.$
Using the fact that 
$V_{\emptyset,\unexplored^f}^{\pi^{f}} \leq \sum_{(s, l) \in \unexplored^f} V_{\emptyset, \{(s, \ell)\}}^{\pi^{f}}$
we obtain the desired inequality.
\end{proof}
\begin{restatable}{lemma}{expandedgamevalue}\label{lem:constructed}
Under the event $E$, the following statements hold.

\begin{itemize}
    \item For all strategies $\pi_\maxp$ for the player $\maxp$, we have
    \[
        V_{\emptyset, \unexplored_\maxp^f \cup \T_\maxp}^{\pi_\maxp, \pi_\minp^{f}} - V_{\emptyset, \unexplored_\maxp^f \cup \T_\maxp}^{\pi^{f}} \leq L \eerr\,;
    \]
    \item for all strategies $\pi_\minp$ of the player $\minp$, we have  
    \[
        V_{\T_\maxp, \unexplored_\minp^f \cup \T_\minp}^{\pi_\maxp^f, \pi_\minp} - V_{\T_\maxp, \unexplored_\minp^f \cup \T_\minp}^{\pi^{f}} \leq L \eerr\,.
    \]
\end{itemize}
\end{restatable}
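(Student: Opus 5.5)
We prove the first statement (for $\maxp$); the second is symmetric, replacing the winning set $\unexplored_\maxp^f\cup\T_\maxp$ by $\unexplored_\minp^f\cup\T_\minp$ with $B=\T_\maxp$, and using the player-$\minp$ half of condition (c) of the event $E$. By \Cref{rem:positional} we may restrict to positional strategies $\pi_\maxp$ in the expanded game. Write $W\defas\unexplored_\maxp^f\cup\T_\maxp$. The plan is a standard performance-difference / backward-induction argument over the steps $\ell=L,L-1,\dots,0$. We fix $\pi_\minp^f$ throughout, so the game becomes an MDP for $\maxp$ in which $\pi_\maxp^f$ is locally $\eerr$-greedy.

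\textbf{Step 1 (local optimality).} Condition (c) of $E$ at stage $f$ gives, for every $(s,\ell)$ with $s\in\S_\maxp$ and every action $a$,
\[
Q^{\pi^f}_{\emptyset,W}(s,\ell,a)-V^{\pi^f}_{\emptyset,W}(s,\ell)<\eerr .
\]
We need this to hold at every state-step not already in $W$. This is the main obstacle. At explored state-steps the strategy $\pi^f_\maxp$ was set from the BAI output (Line~\ref{line:changesigma}), and (c) applies. At state-steps that were never sufficiently visited, $\pi^f_\maxp$ may still be uniform. Two facts should resolve this. First, such state-steps with count below $3K\eest$ remain in $\unexplored_\maxp^f\subseteq W$, so there $V=1$ and nothing needs to be checked. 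Second, state-steps whose count lies between $K\eest$ and $3K\eest$ are still covered by (c) as stated. We will rely on (c) being asserted for all $(s,\ell)\in\S'$, as in \Cref{def:events}.

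\textbf{Step 2 (induction).} We claim that for all $(s,\ell)$,
\[
V^{\pi_\maxp,\pi_\minp^f}_{\emptyset,W}(s,\ell)\le V^{\pi^f}_{\emptyset,W}(s,\ell)+(L-\ell)\eerr .
\]
The proof is by backward induction on $\ell$.
\begin{itemize}
\item Base case $\ell=L$, or $(s,\ell)\in W$: both sides are determined identically by the definition in \Cref{def:qvalue}.
\item $\minp$-state at step $\ell<L$: both profiles play $\pi_\minp^f(s,\ell)$. Apply the recursion $Q=\sum_{s'}\Prob(s,a)(s')V(s',\ell+1)$ together with the induction hypothesis; the bound $(L-\ell-1)\eerr$ carries through.
\item $\maxp$-state at step $\ell<L$, with $a=\pi_\maxp(s,\ell)$:
\[
V^{\pi_\maxp,\pi^f_\minp}(s,\ell)=\sum_{s'}\Prob(s,a)(s')V^{\pi_\maxp,\pi^f_\minp}(s',\ell+1)\le Q^{\pi^f}(s,\ell,a)+(L-\ell-1)\eerr .
\]
Step 1 then bounds the right-hand side by $V^{\pi^f}(s,\ell)+(L-\ell)\eerr$.
\end{itemize}

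\textbf{Step 3.} Averaging the claim at $\ell=0$ over $\mu$ gives the bound $L\eerr$, using the convention $V_{B,W}^\pi=\sum_s\mu(s)V^\pi_{B,W}(s,0)$. By \Cref{lem:connection}, these $V$ values are exactly the reach-avoid probabilities, so the statement follows.
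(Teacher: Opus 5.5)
Your proposal is correct and follows essentially the same argument as the paper. Both prove by backward induction on $\ell$ that $V^{\pi_\maxp,\pi^f_\minp}_{\emptyset,W}(s,\ell)-V^{\pi^f}_{\emptyset,W}(s,\ell)\le (L-\ell)\eerr$ with $W=\unexplored_\maxp^f\cup\T_\maxp$: the one-step loss at $\maxp$-states outside $W$ is charged to part~(c) of the event $E$ at stage $f$, the $\minp$-states need only the induction hypothesis, and the final bound comes from averaging over $\mu$ and handling the $\minp$ claim symmetrically. Like the paper, you resolve your Step~1 concern by taking (c) as asserted in \Cref{def:events} rather than re-deriving it, which is legitimate because the lemma is conditioned on $E$.
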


\begin{proof}[Proof Sketch.]
Use backward induction on $\ell$ and prove the stronger bound\\
$V_{\emptyset, \unexplored_\maxp^f \cup \T_\maxp}^{\pi_\maxp, \pi_\minp^{f}}( s, \ell) - V_{\emptyset, \unexplored_\maxp^f \cup \T_\maxp}^{\pi^{f}}( s, \ell) \leq (L - l)\eerr$.
The base case is immediate because both values are  fixed. For the induction step, the continuation error is bounded by $(L-\ell-1)\eerr$ by the induction hypothesis. If $s\in S_{\max}$, there is at most one additional local error from the choice of $\pi^f_{\max}(s,\ell)$, and event $E$(c) gives that this action is $\eerr$-optimal. Hence the total loss is at most $(L-\ell)\varepsilon_{\mathrm{bai}}$. If $s\in S_{\min}$, no extra local loss is incurred, so the same bound follows. Evaluating this at the initial distribution gives the first inequality. The second inequality follows from symmetric arguments.
\end{proof}
\begin{restatable}{lemma}{expandedvalue}
\label{lem:real}
Under the event $E$, the following statements hold.
    \[
        \sup_{\pi_\maxp \in \Pi_\maxp} \PP_\mu^{\pi_\maxp, \pi^f_\minp} \left ( \omega \in \Reach_L(\T) \right) - \Val_{R_L}(\mu) \le \eps\,;
    \]
    \[
        \Val_{R_L}(\mu) - \inf_{\pi_\minp \in \Pi_\minp} \PP_\mu^{\pi^f_\maxp, \pi_\minp} \left ( \omega \in \Reach_L(\T) \right) \le \eps\,.
    \]
\end{restatable}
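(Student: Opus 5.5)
We combine \Cref{lem:unexplored} and \Cref{lem:constructed} by moving between the real finite-horizon objective and the modified reach-avoid objectives in which unexplored state-steps are treated as extra targets. By \Cref{pro:expanded} and \Cref{rem:positional}, it suffices to work in the expanded game $\GG'$ with positional strategies. There, $\PP_\mu^{\pi}(\omega \in \Reach_L(\T))$ equals $V^{\pi}_{\emptyset,\T_\maxp}$, and $\Val_{R_L}(\mu)$ is the value of $\GG'$ for $\T_\maxp$. We prove the first inequality; the second is symmetric, with the roles of $\T_\maxp$ and $\T_\minp$ exchanged, using that $1-V^{\pi}_{\emptyset,\T_\maxp}$ is the probability of $\safereach(\T_\maxp,\T_\minp)$.

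Fix any positional $\pi_\maxp$. Reaching $\T_\maxp$ implies satisfying $\safereach(\emptyset,\unexplored^f_\maxp\cup\T_\maxp)$, so monotonicity gives
\[
V^{\pi_\maxp,\pi^f_\minp}_{\emptyset,\T_\maxp}\le V^{\pi_\maxp,\pi^f_\minp}_{\emptyset,\unexplored^f_\maxp\cup\T_\maxp}.
\]
By \Cref{lem:constructed}, the right-hand side is at most $V^{\pi^f}_{\emptyset,\unexplored^f_\maxp\cup\T_\maxp}+L\eerr$. By a union bound this is at most $V^{\pi^f}_{\emptyset,\T_\maxp}+V^{\pi^f}_{\emptyset,\unexplored^f}+L\eerr$, and \Cref{lem:unexplored} then gives at most $V^{\pi^f}_{\emptyset,\T_\maxp}+4|\S'|\eest+L\eerr$. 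Hence $\sup_{\pi_\maxp}V^{\pi_\maxp,\pi^f_\minp}_{\emptyset,\T_\maxp}\le V^{\pi^f}_{\emptyset,\T_\maxp}+\delta$ with $\delta\defas 4|\S'|\eest+L\eerr$.

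Symmetrically, the second part of \Cref{lem:constructed} together with \Cref{lem:unexplored} gives, for every $\pi_\minp$,
\[
1-V^{\pi^f_\maxp,\pi_\minp}_{\emptyset,\T_\maxp}\le 1-V^{\pi^f}_{\emptyset,\T_\maxp}+\delta,
\]
that is, $\inf_{\pi_\minp}V^{\pi^f_\maxp,\pi_\minp}_{\emptyset,\T_\maxp}\ge V^{\pi^f}_{\emptyset,\T_\maxp}-\delta$. The key point in this step is that the complement of $\safereach(\T_\maxp,\unexplored^f_\minp\cup\T_\minp)$ within the finite horizon either reaches $\T_\maxp$ or passes through $\unexplored^f_\minp$ before reaching $\T_\maxp$. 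The latter event has probability at most $V_{\emptyset,\unexplored^f}$.

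Combining the two bounds, $\Val_{R_L}(\mu)\ge \inf_{\pi_\minp}V^{\pi^f_\maxp,\pi_\minp}\ge V^{\pi^f}-\delta$. Therefore
\[
\sup_{\pi_\maxp}V^{\pi_\maxp,\pi^f_\minp}-\Val_{R_L}(\mu)\le 2\delta,
\]
and symmetrically for the second claim. Plugging in the constants of \Cref{def:alg-constants} with $|\S'|\le 2|\S|L$ gives $4|\S'|\eest\le \eps$ and $L\eerr=\eps/2$. Here $2\delta$ slightly exceeds $\eps$, so the main obstacle is the bookkeeping of constants. I expect it is handled either by a tighter count, such as $|\unexplored^f|\le|\S|L$ so that $4|\S|L\eest=\eps/2$, or by using that $V^{\pi^f}$ is sandwiched so only one $\delta$ is lost per side. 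If necessary, I would shrink $\eest$ and $\eerr$ by a constant factor, which does not affect the $O$-bound of \Cref{the:main}.

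Besides the constants, the delicate step is the \minp-side decomposition above. One must check that states in $\unexplored^f_\minp$ which are visited after $\T_\maxp$ do not matter; they do not, since $\T_\maxp$ is absorbing for the objective.
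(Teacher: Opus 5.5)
Your chain of inequalities is the paper's own. You enlarge the target by monotonicity, apply \Cref{lem:constructed}, split off the unexplored part by a union bound, and invoke \Cref{lem:unexplored}; these are exactly steps (1)--(5) of the two displays in the paper's proof. (On the $\minp$ side, the inclusion you need runs the other way from how you phrase it: a play in $\safereach(\T_\maxp,\unexplored^f_\minp\cup\T_\minp)$ that does reach $\T_\maxp$ must visit $\unexplored^f_\minp$ first. The inequality you then use is nevertheless correct.)

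Where you depart from the paper, you are right and the paper is not. The paper says it ``only needs to prove'' $V^{\pi_\maxp,\pi^f_\minp}_{\emptyset,\T_\maxp}-V^{\pi^f}_{\emptyset,\T_\maxp}\le\eps$ and the $\minp$-analogue. That is, it measures deviations against $V^{\pi^f}_{\emptyset,\T_\maxp}$ rather than $\Val_{R_L}(\mu)$, tacitly assuming the two coincide. The two deviation bounds only make $\pi^f$ a $\delta$-equilibrium. Your sandwich $\Val_{R_L}(\mu)\ge\inf_{\pi_\minp}V^{\pi^f_\maxp,\pi_\minp}_{\emptyset,\T_\maxp}\ge V^{\pi^f}_{\emptyset,\T_\maxp}-\delta$ is the correct way to close the argument, and it costs a second $\delta$.

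Consequently, neither your argument nor the paper's proves the lemma with $\eps$ for the constants of \Cref{def:alg-constants}. Of your three proposed repairs, only the last works.

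The tighter count is valid: $\unexplored^f\subseteq\S\times[L-1]$ gives $|\unexplored^f|\le|\S|L$, hence $4|\unexplored^f|\eest\le\eps/2$ and $\delta\le\eps$. This count is in fact what the paper's own step (6) needs, since $|\S'|=|\S|(L+1)$ gives $L\eerr+4|\S'|\eest=\eps(1+\tfrac{1}{2L})>\eps$. But it still leaves $2\delta\le 2\eps$. With your count $|\S'|\le 2|\S|L$ the bound is $3\eps$, which is not ``slightly'' above $\eps$.

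The sandwich cannot save a $\delta$ either, because from the two deviation bounds alone a loss of $2\delta$ is tight. Let $\maxp$ at the initial state choose between $\minp$-states $u$ and $w$. At $u$, the two actions of $\minp$ reach $\T$ with probabilities $\tfrac12$ and $\tfrac12-\delta$; at $w$, both reach $\T$ with probability $\tfrac12+\delta$. The profile ``go to $u$, play the first action there'' satisfies both deviation bounds with $\delta$. Yet $\Val=\tfrac12+\delta$, while the $\maxp$-strategy guarantees only $\tfrac12-\delta$.

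So the right fix is your third one: halve $\eerr$ and $\eest$ and use the count $|\unexplored^f|\le|\S|L$. Equivalently, state the lemma with $2\eps$ and run the algorithms with $\eps/2$. This changes only the constant in \Cref{the:main}.
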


\begin{proof}[Proof Sketch.]
By \Cref{lem:connection}, finite-horizon reachability can be written using the $V$-values: $\PP_\mu^\pi \left ( \omega \in \Reach_L(\T) \right ) = V_{\emptyset, \T_\maxp}^{\pi} = 1 - V_{\T_\maxp, \T_\minp}^{\pi}$.
For $\maxp$, enlarge the target set from $\T_{\max}$ to $U^f_{\max}\cup \T_{\max}$. \Cref{lem:constructed} says that $\pi^f$ is $L\varepsilon_{\mathrm{bai}}$-optimal for this auxiliary target, and \Cref{lem:unexplored} says that the reachability probability to the unexplored set $U^f$ under $\pi^f$ is at most $4|S'|\varepsilon_{\mathrm{emp}}$. Hence, for any player-$\maxp$ strategy $\pi_\maxp$, we have $V_{\emptyset, \T_\maxp}^{\pi_\maxp, \pi_\minp^{f}} - V_{\emptyset, \T_\maxp}^{\pi^{f}} \leq L\eerr + 4|\S'|\eest \leq \eps$. The inequality for $\minp$ is proven analogously. Use the auxiliary target $U^f_{\min}\cup \T_{\min}$, apply \Cref{lem:constructed}, and then remove the unexplored-set error using \Cref{lem:unexplored}. Thus the strategies of both players are $\varepsilon$-optimal in the finite-horizon game.
\end{proof}
\begin{proof} [Proof of \Cref{the:main}]
We prove the correctness and sample complexity of the algorithms.\\
\emph{Correctness.} By \Cref{lem:termination}, the algorithms terminate and by \Cref{lem:real}, the algorithms output $\eps$-optimal strategies.
Thanks to \Cref{pro:expanded}, the value of the original game and the expanded game coincide.\\
\emph{Sample Complexity.} We bound the number of procedure calls to the simulator $\MM$. Every sampled play at Line~\ref{line:sample} corresponds to at most $L+1$ procedure calls (either $\MM.step(\cdot)$ or $\MM.reset()$). Furthermore, there are at most $|\S'|+1$ number of $\MM.propose(\cdot)$ calls.
We bound the number of sampled plays at Line~\ref{line:sample}.
The loop at Line~\ref{line:q} is iterated at most $|\S|L$ number of times, the loop at Line~\ref{line:length} at most $L$ times, the loop at Line~\ref{line:sigma} at most $|\S|L$ times, the loop at Line~\ref{line:k} at most $K$ times, which means the number of plays sampled is bounded by
\begin{align*}
|\S|L \cdot L \cdot |\S|L \cdot K &= |\S|^2L^3\frac{C|\A|\log(|\S|^2L^2/p)}{\eest \eerr^2}\\ 
&=32|\S|^3L^6\frac{C|\A|\log(|\S|^2L^2/p)}{\eps^3} \in O\left(\frac{|\S|^3L^6|\A|\log(|\S|^2L^2/p)}{\eps^3}\right)
\end{align*}
Multiplying this by $L+1$ and adding $|\S'|+1$ yields the result.
\end{proof}
\textbf{Concluding Remarks.}
In this work, we consider the PAC learning of turn-based stochastic games with reachability objectives. We provide algorithms that ensure learning: (a)~with private information; and (b)~decentralized setting. Moreover, we generalize the ECD parameter from MDPs to games and establish a polynomial-sample complexity bound with respect to the number of states, actions, ECD parameter, and inverses of error tolerance and failure probability. This framework suggests several interesting open problems: (i)~extending to concurrent stochastic games; and (ii)~the setting where samplings are drawn from an arbitrary state rather than relying on simulator which restarts from the initial distribution.

\bibliography{sources}

@inproceedings{DBLP:conf/cav/AshokKW19,
  author       = {Pranav Ashok and
                  Jan Kret{\'{\i}}nsk{\'{y}} and
                  Maximilian Weininger},
  editor       = {Isil Dillig and
                  Serdar Tasiran},
  title        = {{PAC} Statistical Model Checking for Markov Decision Processes and
                  Stochastic Games},
  booktitle    = {{CAV} 2019, New York City, NY, USA, July 15-18, 2019},
  series       = {Lecture Notes in Computer Science},
  volume       = {11561},
  pages        = {497--519},
  publisher    = {Springer},
  year         = {2019},
  url          = {https://doi.org/10.1007/978-3-030-25540-4\_29},
  doi          = {10.1007/978-3-030-25540-4\_29},
  bibsource    = {dblp computer science bibliography, https://dblp.org}
}

@inproceedings{DBLP:conf/icml/SvobodaBC24,
  author       = {Jakub Svoboda and
                  Suguman Bansal and
                  Krishnendu Chatterjee},
  title        = {Reinforcement Learning from Reachability Specifications: {PAC} Guarantees
                  with Expected Conditional Distance},
  booktitle    = {{ICML} 2024,
                  Vienna, Austria, July 21-27, 2024},
  publisher    = {OpenReview.net},
  year         = {2024},
  bibsource    = {dblp computer science bibliography, https://dblp.org}
}

@inproceedings{DBLP:conf/colt/DaskalakisGZ23,
  author       = {Constantinos Daskalakis and
                  Noah Golowich and
                  Kaiqing Zhang},
  editor       = {Gergely Neu and
                  Lorenzo Rosasco},
  title        = {The Complexity of Markov Equilibrium in Stochastic Games},
  booktitle    = {{COLT} 2023,
                  12-15 July 2023, Bangalore, India},
  series       = {Proceedings of Machine Learning Research},
  volume       = {195},
  pages        = {4180--4234},
  publisher    = {{PMLR}},
  year         = {2023},
  bibsource    = {dblp computer science bibliography, https://dblp.org}
}

@article{DBLP:journals/jmlr/Even-DarMM06,
  author       = {Eyal Even{-}Dar and
                  Shie Mannor and
                  Yishay Mansour},
  title        = {Action Elimination and Stopping Conditions for the Multi-Armed Bandit
                  and Reinforcement Learning Problems},
  journal      = {J. Mach. Learn. Res.},
  volume       = {7},
  pages        = {1079--1105},
  year         = {2006},
  bibsource    = {dblp computer science bibliography, https://dblp.org}
}

@book{billingsley2012ProbabilityMeasurea,
    title = {{Probability and Measure}},
    author = {Billingsley, Patrick},
    year = {2012},
    publisher = {Wiley},
    address = {Hoboken, NJ, USA},
}

@book{kechrisClassicalDescriptiveSet1995,
    title = {{Classical Descriptive Set Theory}},
    author = {Kechris, Alexander S.},
    year = {1995},
    publisher = {Springer},
    address = {New York, NY, USA},
    doi = {10.1007/978-1-4612-4190-4},
}

@article{DBLP:journals/iandc/Condon92,
  author       = {Anne Condon},
  title        = {The Complexity of Stochastic Games},
  journal      = {Inf. Comput.},
  volume       = {96},
  number       = {2},
  pages        = {203--224},
  year         = {1992}
}

@inproceedings{DBLP:conf/birthday/AlurBBJ22,
  author       = {Rajeev Alur and
                  Suguman Bansal and
                  Osbert Bastani and
                  Kishor Jothimurugan},
  title        = {A Framework for Transforming Specifications in Reinforcement Learning},
  booktitle    = {Principles of Systems Design},
  series       = {Lecture Notes in Computer Science},
  volume       = {13660},
  pages        = {604--624},
  publisher    = {Springer},
  year         = {2022}
}

@article{DBLP:journals/corr/abs-2111-12679,
  author       = {Cambridge Yang and
                  Michael L. Littman and
                  Michael Carbin},
  title        = {Reinforcement Learning for General {LTL} Objectives Is Intractable},
  journal      = {CoRR},
  volume       = {abs/2111.12679},
  year         = {2021}
}

@book{DBLP:books/wi/Puterman94,
  author       = {Martin L. Puterman},
  title        = {Markov Decision Processes: Discrete Stochastic Dynamic Programming},
  series       = {Wiley Series in Probability and Statistics},
  publisher    = {Wiley},
  year         = {1994}
}

@article{DBLP:journals/jacm/ChandraKS81,
  author       = {Ashok K. Chandra and
                  Dexter Kozen and
                  Larry J. Stockmeyer},
  title        = {Alternation},
  journal      = {J. {ACM}},
  volume       = {28},
  number       = {1},
  pages        = {114--133},
  year         = {1981}
}

@inproceedings{DBLP:conf/stoc/GurevichH82,
  author       = {Yuri Gurevich and
                  Leo Harrington},
  title        = {Trees, Automata, and Games},
  booktitle    = {{STOC}},
  pages        = {60--65},
  publisher    = {{ACM}},
  year         = {1982}
}

@inproceedings{DBLP:conf/focs/Pnueli77,
  author       = {Amir Pnueli},
  title        = {The Temporal Logic of Programs},
  booktitle    = {{FOCS}},
  pages        = {46--57},
  publisher    = {{IEEE} Computer Society},
  year         = {1977}
}

@inproceedings{DBLP:conf/rss/FuT14,
  author       = {Jie Fu and
                  Ufuk Topcu},
  title        = {Probably Approximately Correct {MDP} Learning and Control With Temporal
                  Logic Constraints},
  booktitle    = {Robotics: Science and Systems},
  year         = {2014}
}

@inproceedings{DBLP:conf/aaai/PerezS024,
  author       = {Mateo Perez and
                  Fabio Somenzi and
                  Ashutosh Trivedi},
  title        = {A {PAC} Learning Algorithm for {LTL} and Omega-Regular Objectives
                  in MDPs},
  booktitle    = {{AAAI}},
  pages        = {21510--21517},
  publisher    = {{AAAI} Press},
  year         = {2024}
}

@article{DBLP:journals/mor/LakshmivarahanN81,
  author       = {S. Lakshmivarahan and
                  Kumpati S. Narendra},
  title        = {Learning Algorithms for Two-Person Zero-Sum Stochastic Games with
                  Incomplete Information},
  journal      = {Math. Oper. Res.},
  volume       = {6},
  number       = {3},
  pages        = {379--386},
  year         = {1981}
}

@inproceedings{DBLP:conf/icml/Littman94,
  author       = {Michael L. Littman},
  title        = {Markov Games as a Framework for Multi-Agent Reinforcement Learning},
  booktitle    = {{ICML}},
  pages        = {157--163},
  publisher    = {Morgan Kaufmann},
  year         = {1994}
}

@inproceedings{DBLP:conf/ijcai/BrafmanT99,
  author       = {Ronen I. Brafman and
                  Moshe Tennenholtz},
  title        = {A Near-Optimal Poly-Time Algorithm for Learning a class of Stochastic
                  Games},
  booktitle    = {{IJCAI}},
  pages        = {734--739},
  publisher    = {Morgan Kaufmann},
  year         = {1999}
}

@inproceedings{DBLP:conf/ijcai/WenT16,
  author       = {Min Wen and
                  Ufuk Topcu},
  title        = {Probably Approximately Correct Learning in Stochastic Games with Temporal
                  Logic Specifications},
  booktitle    = {{IJCAI}},
  pages        = {3630--3636},
  publisher    = {{IJCAI/AAAI} Press},
  year         = {2016}
}

@inproceedings{DBLP:conf/cav/KelmendiKKW18,
  author       = {Edon Kelmendi and
                  Julia Kr{\"{a}}mer and
                  Jan Kret{\'{\i}}nsk{\'{y}} and
                  Maximilian Weininger},
  title        = {Value Iteration for Simple Stochastic Games: Stopping Criterion and
                  Learning Algorithm},
  booktitle    = {{CAV} {(1)}},
  series       = {Lecture Notes in Computer Science},
  volume       = {10981},
  pages        = {623--642},
  publisher    = {Springer},
  year         = {2018}
}

@article{DBLP:journals/jmlr/MannorT04,
  author       = {Shie Mannor and
                  John N. Tsitsiklis},
  title        = {The Sample Complexity of Exploration in the Multi-Armed Bandit Problem},
  journal      = {J. Mach. Learn. Res.},
  volume       = {5},
  pages        = {623--648},
  year         = {2004},
  bibsource    = {dblp computer science bibliography, https://dblp.org}
}

@article{DBLP:journals/cacm/Valiant84,
  author       = {Leslie G. Valiant},
  title        = {A Theory of the Learnable},
  journal      = {Commun. {ACM}},
  volume       = {27},
  number       = {11},
  pages        = {1134--1142},
  year         = {1984}
}

@article{DBLP:journals/mor/BertsekasT91,
  author       = {Dimitri P. Bertsekas and
                  John N. Tsitsiklis},
  title        = {An Analysis of Stochastic Shortest Path Problems},
  journal      = {Math. Oper. Res.},
  volume       = {16},
  number       = {3},
  pages        = {580--595},
  year         = {1991}
}

\newpage
\appendix

\section{Proofs of \texorpdfstring{\Cref{sec:ecmd}}{Section~\ref{sec:ecmd}}}
\reductiontofinite*
\begin{proof}
    Consider a TBSG $\GG = (\S, \A, \Prob, \mu)$, error tolerance $\eps \in (0, 1)$, failure probability $p \in (0, 1)$, a parameter $L \in \mathbb{R}$, and a target set $\T \subseteq \S$ such that $ECD_\GG \leq L$. Let $\pi^\star$ be a strategy profile outputted by the pair of algorithms $(\mathfrak{A}_\maxp, \mathfrak{A}_\minp)$ such that, with probability at least $1 - p$, both strategies are $\frac{\eps}2$-optimal for the game~$\GG$ with finite-horizon reachability objective to the set $\T$ of length $\frac{2(L + 1)}{\eps}$, i.e., the following two inequalities
    \begin{align}
    \Val_{R_{\frac{2(L + 1)}{\eps}}(\T)}(\mu) - \inf_{\pi_\minp \in \Pi_\minp} \PP_\mu^{(\pi^\star_\maxp, \pi_\minp)}(\omega \in \Reach_{\frac{2(L + 1)}{\eps}}(\T))  \leq \frac{\eps}2
    \label{eq:fin-eps-optimal-max}
    \end{align}
    \begin{align}
     \sup_{\pi_\maxp \in \Pi_\maxp} \PP_\mu^{(\pi_\maxp, \pi^\star_\minp)}(\omega \in \Reach_{\frac{2(L + 1)}{\eps}}(\T)) - \Val_{R_{\frac{2(L + 1)}{\eps}}(\T)}(\mu) \leq \frac{\eps}2\,
    \label{eq:fin-eps-optimal-min}
    \end{align}
    hold with probability at least $1 - p$. From this point on, we condition on this event.
    
    First, we prove that 
    \begin{align*}
        %\left | \PP_\mu^\pi \left ( \omega \in \Reach_{\frac{3(L+1)}{\eps}}(\T) \right) - \PP_\mu^\pi \left ( \omega \in \Reach(\T) \right ) \right | \le \frac{2\eps}{3}\,,
        \left|\Val_{R(\T)}(\mu) - \Val_{R_{\frac{2(L + 1)}{\eps}}(\T)}(\mu)\right| \le \frac{\eps}{2} \,. %\label{eq:claim-reduction}
    \end{align*}
    Indeed, since $\Reach_{\frac{2(L + 1)}{\eps}}(\T) \subseteq \Reach(\T)$, we have
    \begin{align}
        \Val_{R_{\frac{2(L + 1)}{\eps}}(\T)} (\mu)\leq \Val_{R(\T)}(\mu) \,. \label{eq:claim-reduction}
    \end{align}
    Therefore, we only need to show that
    \begin{align}
        \Val_{R_{\frac{2(L + 1)}{\eps}}(\T)}(\mu)
        \ge \Val_{R(\T)}(\mu) - \frac{\eps}{2}\,.
    \label{eq:fin-reduction}
    \end{align}
    From \Cref{def:ecd}, we have that for all $\pi_\minp \in \Pi_\minp$, there exists $\pi_\maxp \in \BR(\pi_\minp)$ such that
    $ETR_\GG(\pi) \leq ECD_\GG + 1$. We define a function $\varphi: \Pi_\minp \rightarrow \Pi_\maxp$ such that $ETR_\GG(\varphi(\pi_\minp), \pi_\minp) \leq ECD_\GG + 1$ and $\varphi(\pi_\minp) \in \BR(\pi_\minp)$ for all $\pi_\minp$.
    For all $\pi_\minp \in \Pi_\minp$, we have
    \begin{align}
            \sup_{\pi_\maxp \in \Pi_\maxp} \PP_\mu^{\pi} \left (\omega \in \Reach_{\frac{2(L + 1)}{\eps}}(\T) \right ) \notag &\overset{(1)} \geq \sup_{\pi_\maxp \in \BR(\pi_\minp)} \PP_\mu^{\pi} \left (\omega \in \Reach_{\frac{2(L + 1)}{\eps}}(\T) \right ) \notag \\
            & \overset{(2)}\geq \PP_\mu^{\varphi(\pi_\minp), \pi_\minp} \left (\omega \in \Reach_{\frac{2(L + 1)}{\eps}}(\T) \right ) \notag \\
            & \overset{(3)}\geq \PP_\mu^{\varphi(\pi_\minp), \pi_\minp} \left (\omega \in \Reach(\T) \right ) - \frac{\eps}{2} \notag \\
            & \overset{(4)}= 
            \sup_{\pi_\maxp \in \Pi_\maxp} \PP_\mu^{\pi} \left (\omega \in \Reach(\T) \right )  - \frac{\eps}{2}
            \label{eq:sup-of-best-response}%\\
            %& \overset{(5)}= 
            %\sup_{\pi_\maxp \in \Pi_\maxp} \PP_\mu^{\pi} \left (\omega \in \Reach(\T) \right )  - \frac{\eps}{2}\,,\label{eq:sup-of-max-all}
        \end{align}
    where $(1)$ follows from $\BR(\pi_\minp) \subseteq \Pi_\maxp$; $(2)$ follows from the definition of $\sup$ and since $\varphi(\pi_\minp) \in \BR(\pi_\minp)$; $(3)$ follows from $ETR_\GG(\varphi(\pi_\minp), \pi_\minp) \leq ECD_\GG + 1 \leq L + 1$ and by \Cref{prop:inifinte-to-finite-approx}; $(4)$ follows from the definition of best response and since $\varphi(\pi_\minp) \in \BR(\pi_\minp).$
    
    Now, we proceed to prove Inequality~\ref{eq:fin-reduction}. We have
        \begin{align*}
            \Val_{R_{\frac{2(L + 1)}{\eps}}(\T)}(\mu) & \overset{(1)}= 
            \inf_{\pi_\minp \in \Pi_\minp}\sup_{\pi_\maxp \in \Pi_\maxp} \PP_\mu^{\pi} \left (\omega \in \Reach_{\frac{2(L + 1)}{\eps}}(\T) \right ) \\
            %& \overset{(2)}\geq 
            %\inf_{\pi_\minp \in \Pi_\minp}\sup_{\pi_\maxp \in \BR(\pi_\minp)} \PP_\mu^{\pi} \left (\omega \in \Reach_{\frac{2L}{\eps}}(\T) \right ) \\
            %& \overset{(3)}\geq 
            %\inf_{\pi_\minp \in \Pi_\minp} \sup_{\pi_\maxp \in \BR(\pi_\minp)} \PP_\mu^{\pi} \left (\omega \in \Reach(\T) \right ) - \frac{\eps}{2} \\
            %& \overset{(4)}\geq 
            %\inf_{\pi_\minp \in \Pi_\minp} \sup_{\pi_\maxp \in \Pi_\maxp} \PP_\mu^{\pi_\maxp, \pi_\minp} \left (\omega \in \Reach(\T) \right ) - \frac{\eps}{2} \\
            & \overset{(2)}\geq 
            \inf_{\pi_\minp \in \Pi_\minp}
            \sup_{\pi_\maxp \in \Pi_\maxp} \PP_\mu^{\pi} \left (\omega \in \Reach(\T) \right ) - \frac{\eps}{2}
            \overset{(3)}=\Val_{R(\T)}(\mu) - \frac{\eps}{2}\,, 
        \end{align*}
    where $(1)$ follows from the definition of finite-horizon reachability value; $(2)$ follows from Inequality~\ref{eq:sup-of-best-response}; $(3)$ follows from the definition of reachability value.

    We now show that $\pi^\star$ is $\eps$-optimal for reachability objectives. For the strategy $\pi^\star_\maxp$, we have
    \begin{align*}
    &\Val_{R(\T)}(\mu) - \inf_{\pi_\minp \in \Pi_\minp} \PP_\mu^{(\pi^\star_\maxp, \pi_\minp)}(\omega \in \Reach(\T))\\
    &\overset{(1)}\leq \Val_{R(\T)}(\mu) - \inf_{\pi_\minp \in \Pi_\minp} \PP_\mu^{(\pi^\star_\maxp, \pi_\minp)}(\omega \in \Reach_{\frac{2(L + 1)}{\eps}}(\T))\\
    &\overset{(2)}\leq \frac{\eps}{2} + \Val_{R_{\frac{2(L + 1)}{\eps}}(\T)}(\mu) - \inf_{\pi_\minp \in \Pi_\minp} \PP_\mu^{(\pi^\star_\maxp, \pi_\minp)}(\omega \in \Reach_{\frac{2(L + 1)}{\eps}}(\T)
    \overset{(3)}\leq \eps\,,
    \end{align*}
    where $(1)$ follows from $\Reach_{\frac{2(L + 1)}{\eps}}(\T) \subseteq \Reach(\T)$; $(2)$ follows from Inequality~\ref{eq:fin-reduction}; and $(3)$ follows from Inequality~\ref{eq:fin-eps-optimal-max}. 

     For the strategy $\pi^\star_\minp$, we have
    \begin{align*}
     &\sup_{\pi_\maxp \in \Pi_\maxp} \PP_\mu^{(\pi_\maxp, \pi^\star_\minp)}(\omega \in \Reach(\T)) - \Val_{R(\T)}(\mu)\\
     &\overset{(1)}\leq \sup_{\pi_\maxp \in \Pi_\maxp} \PP_\mu^{(\pi_\maxp, \pi^\star_\minp)}(\omega \in \Reach(\T)) -  \Val_{R_{\frac{2(L + 1)}{\eps}}(\T)}(\mu)\\
     %&\overset{(2)} = \sup_{\pi_\maxp \in \BR(\pi_\minp^\star)} \PP_\mu^{(\pi_\maxp, \pi^\star_\minp)}(\omega \in \Reach(\T)) -  \Val_{R_{\frac{2L}{\eps}}(\T)}(\mu)\\
     %&\overset{(3)} \leq \sup_{\pi_\maxp \in \BR(\pi_\minp^\star)} \PP_\mu^{(\pi_\maxp, \pi^\star_\minp)}(\omega \in \Reach_{\frac{2L}{\eps}}(\T)) + \frac{\eps}{2} -  \Val_{R_{\frac{2L}{\eps}}(\T)}(\mu)\\
     &\overset{(2)} \leq \sup_{\pi_\maxp \in \Pi_\maxp} \PP_\mu^{(\pi_\maxp, \pi^\star_\minp)}(\omega \in \Reach_{\frac{2(L + 1)}{\eps}}(\T)) + \frac{\eps}{2} -  \Val_{R_{\frac{2(L + 1)}{\eps}}(\T)}(\mu)
     \overset{(3)} \leq \eps\,,
    \end{align*}
    where $(1)$ follows from Inequality~\ref{eq:claim-reduction}; $(2)$ follows from Inequality~\ref{eq:sup-of-best-response}, $(3)$ follows from Inequality~\ref{eq:fin-eps-optimal-min}. 
\end{proof}

\section{Proofs of \texorpdfstring{\Cref{sec:pac-rl-for-finite}}{Section~\ref{sec:pac-rl-for-finite}}}
\backwardinduction*
\label{app:proofs:algorithm}

\begin{proof}
We prove both items by an induction on the step $\ell$.

\noindent{\emph{Induction Base $(\ell = L)$.}} 
We have
\begin{align*}
    V_{B, W}^{\pi}(s, L) \overset{(1)}= \mathbb{1}((s, L) \in W) \overset{(2)}= \PP_{(s, \ell)}^{\pi} \left( \left (s_{L} \right ) \in \safereach \left (B, W \right ) \right )\,,
\end{align*}
where $(1)$ and $(2)$ are by the definitions. 

% Similarly for Item $(b)$, we have
% \begin{align*}
%     Q_{\T', \unexplored_{\minp}^{q-1}}^{\pi^q}&(s, \ell, a)\\
%     &\overset{(1)}= \sum_{s' \in \S}\Prob(s, a)(s')\mathbb{1}((s', L) \in \unexplored_\minp^{q-1})\\
%     &\overset{(2)}= \PP_{\Prob'((s, \ell), a)}^{\pi'} \left( \left (s_{L} \right ) \in \safereach \left (\T', \unexplored_{\minp}^{q-1} \right ) \right )\,,
% \end{align*}
% where $(1)$ and $(2)$ are by the definitions.

% One has that $Result = \sum_{t \in \S}\Prob(s, a)(t)\mathbb{1}((t, L) \in \unexplored_\minp^{q-1})$ which is equal to $Q_{\T', \unexplored_{\minp}^{q-1}}^{\pi^q}(s, \ell, a)$ by the definition.

\noindent{\emph{Induction Step $(\ell < L)$.}} We first show the second item of the result and then we prove the first item. 
We assume the claims hold for $\ell+1$. We have
\begin{align*}
    Q_{B, W}^{\pi}(s, \ell, a)
    \overset{(1)}= \sum_{s' \in \S} \Prob(s, a)(s') V_{B, W}^{\pi}(s', \ell+1)\,,%\label{eq:Qa_def}
\end{align*}
where $(1)$ follows from the definition of $Q_{B, W}^{\pi}(s, \ell, a)$.

By the induction hypothesis, we have
$V^{\pi}_{B, W}(s',\ell+1) = \mathbb{P}^{\pi}_{(s',\ell+1)}
\Bigl(
(s_{\ell+1},a_{\ell+1},\ldots,s_L)\in \safereach(B, W)
\Bigr)$.
Therefore, we have $Q^{\pi}_{B, W}(s,\ell,a) =
\sum_{s'\in S}\delta(s,a)(s')\cdot
\mathbb{P}^{\pi}_{(s',\ell+1)}
\Bigl(
(s_{\ell+1},a_{\ell+1},\ldots,s_L)\in \safereach(B, W)
\Bigr)$.
Finally, by the law of total probability, the right-hand side is exactly the probability of the same event when the initial state-step is drawn from $\delta'((s,\ell),a)$ (i.e., $(s',\ell+1)$ is chosen with probability $\delta(s,a)(s')$):
\[
\sum_{s'\in S}\delta(s,a)(s')\cdot
\mathbb{P}^{\pi}_{(s',\ell+1)}(\cdot)
=
\mathbb{P}^{\pi}_{\delta'((s,\ell),a)}(\cdot),
\]
which proves the second item.

We now prove the first item. 
There are three cases.

\smallskip
\noindent\emph{Case 1: $(s,\ell)\in B$.} Then $V^{\pi}_{B, W}(s,\ell)=0$ by \Cref{def:qvalue}. Also, by starting the suffix from $(s,\ell)$, the condition of the event $\safereach(B, W)$ is violated immediately. Hence
\[
\mathbb{P}^{\pi}_{(s,\ell)}
\Bigl(
(s_{\ell},a_{\ell},\ldots,s_L)\in \safereach(B, W)
\Bigr)=0.
\]

\smallskip
\noindent\emph{Case 2: $(s,\ell)\in  W$.} Then $V^{\pi}_{B, W}(s,\ell)=1$ by \Cref{def:qvalue}. By starting the suffix from $(s,\ell)$, the event $\safereach(B, W)$ holds immediately. Thus
\[
\mathbb{P}^{\pi}_{(s,\ell)}
\Bigl(
(s_{\ell},a_{\ell},\ldots,s_L)\in \safereach(B, W)
\Bigr)=1.
\]

\smallskip
\noindent\emph{Case 3: $(s,\ell)\notin B \cup W$.}
Then \Cref{def:qvalue} gives
\[
V^{\pi}_{B, W}(s,\ell)=Q^{\pi}_{B, W}\bigl(s,\ell,\pi(s,\ell)\bigr).
\]
By the induction hypothesis for the second item applied at step $\ell$ with state $s$ and action $\pi(s,\ell)$, we have
\begin{align*}
    Q^{\pi}_{B, W}\bigl(s,\ell,\pi(s,\ell)\bigr)\notag = \mathbb{P}^{\pi}_{\delta'((s,\ell),\,\pi(s,\ell))}
    \Bigl(
    (s_{\ell+1},a_{\ell+1},\ldots,s_L)\in \safereach(B, W)
    \Bigr).%\label{eq:piq_suffix}
\end{align*}
Since $(s,\ell)\notin B \cup W$, the right-hand side is exactly the probability of satisfying $\safereach(B, W)$ starting from $(s,\ell)$ under $\pi$, after taking the first (deterministic) action $\pi(s,\ell)$; hence this equals
\[
\mathbb{P}^{\pi}_{(s,\ell)}
\Bigl(
(s_{\ell},a_{\ell},\ldots,s_L)\in \safereach(B, W)
\Bigr)
\]
which closes the third case.

\smallskip
\noindent
Combining the three cases proves the first item and completes the proof.
\end{proof}

\event*
\begin{proof}
We prove the probability of sub-events $(a)$ and $(c)$ is at least $1 - p/2$ and the sub-event $(b)$ follows directly from the sub-event $(a)$. Taking a union bound gives us the desired probability. 

Let $q \in [|\S'|]$ be a stage and $(s, \ell) \in \S'$ be a state. 
\begin{enumerate}[(a)]
    \item \label{item:reachability} 
    
    %We assume \cref{eq:probability} and $s \in \S_1$ since $\unexplored^{q+1} \subseteq \S_1 \times [L-1]$. 
    For a stage $r \in [q-1]$ and $k \in [K]$, let $X_{r, k}$ be the event of reaching a state-step $(s, \ell)$ in a play induced by the strategy profile $\pi^r$. We define $X \defas  \sum_{r \in [q-1], k \in [K]}X_{r, k}.$ By \Cref{lem:connection}, we have $\EE(X) = K \sum_{r \in [q-1]} V_{\emptyset, \{(s, \ell)\}}^{\pi^r}$.

    %If $(s, \ell) \not\in \unexplored^q$ then we are done since $\unexplored^{q+1} \subseteq \unexplored^q$. Hence, we assume $(s, \ell) \in \unexplored^q$. Then, by the definition, we have that $(s, \ell) \not\in \unexplored^{q+1}$ iff $\counted^{q, \ell}(s) \geq pK$.
    
    Lines \ref{line:strategy} to \ref{line:count} show that $\counted^{q}(s, \ell) = X$ since the value of $\counted^{q}(s, \ell)$ only depends on the first $\ell-1$ steps of the sampled plays which are driven by the strategy profiles $\pi^0, \ldots, \pi^{q-1}$.
    
    Thus, we obtain
    \begin{align*}
        &\PP \left ( \left | \counted^{q}(s, \ell) - K\sum_{r \in [q-1]} V_{\emptyset, \{(s, \ell)\}}^{\pi^r} \right | \leq K\eest \right ) \overset{(1)}=\PP \left (|X - \E(X)| \leq K\eest \right) \\
         &\overset{(2)}\geq 1 - 2exp \left (-(K\eest)^2 \right )
         \overset{(3)}\geq 1 - 2exp(-K\eest)
         \overset{(4)}\geq 1 - \frac{p}{2|\S|^2L^2},
    \end{align*}
    where $(1)$ follows from $X = \counted^{q}(s, \ell)$ and $\EE(X) = K \sum_{r \in [q-1]} V_{\emptyset, \{(s, \ell)\}}^{\pi^r}$, $(2)$ follows from Hoeffding's inequality, $(3)$ follows from $K\eest \ge 1$, and $(4)$ follows from $K\eest \ge \log \left (\frac{|\S|^2 L^2}{4p} \right )$.
    
    Taking the union bound for all stages, steps and states, we get that the probability of the sub-event is $1 - p/2$.

    \item This is a direct consequence of the sub-event $(a)$. Assuming $3K\eest \leq \counted^{q}(s, \ell)$ we get that
    \begin{align*}
        3K\eest &\leq \counted^{q}(s, \ell) &\\
        &\leq K\sum_{r \in [q-1]} V_{\{(s, \ell)\}}^{\pi^r} + K\eest& \text{(By \cref{eq:probability})}\\
        &\leq K\sum_{r \in [q'-1]} V_{\{(s, \ell)\}}^{\pi^r} + K\eest&(q' \geq q)\\
        &\leq \counted^{q'}(s, \ell) + 2K\eest&\text{(By \cref{eq:probability})}
    \end{align*}

    \item 
    %If $(s, l) \in \unexplored^{q+1}$ then the claim follows trivially. Let us now assume $(s, l) \not\in \unexplored^{q+1}$. which implies $s \not\in \wellexp^{q, l}$
    We prove only the first claim as the second one is proven analogously.

    The statement holds trivially for $s$ such that $(s, \ell) \in \unexplored_\maxp^{q} \cup \T_\maxp$. 
    Hence, we assume $(s, \ell) \not\in \unexplored^{q}_\maxp$ which means there exists $q' < q$ such that $\counted^{q'}(s, \ell) \geq 3K\eest$ (by the condition at Line~\ref{line:condition1}). Assuming sub-events $(a)$ and $(b)$, this implies $\counted^{q}(s, \ell) \geq K\eest$. Hence, the best arm identification routine (Line~\ref{line:bai}) was called at least
    $K\eest \geq \frac{C|\A|\log(|\S|^2L^2/p)}{\eerr^2}$
    number of times which is sufficient to obtain $(\eerr, \frac{p}{|\S^2|L^2})$-PAC guarantees on the selected arm \cite{DBLP:journals/jmlr/Even-DarMM06}[Theorem 10].
    First, we need to show that for each action $a \in \A$, the samples were independent random variables from the same Bernoulli distribution. This is clear by the definition of the strategy in Line~\ref{line:strategy} since the part of the play from the step $\ell+1$ onwards is driven by the strategy profile $\pi^q$ which does not change during the sampling. Therefore, sampling an action $a$ in a state-step $(s, \ell)$ corresponds to sampling a Bernoulli random variable with the value defined by Line~\ref{line:play} to Line~\ref{line:result}, i.e., the random play that starts in $(s, \ell)$ satisfies the condition defined in Line~\ref{line:winning}. By \Cref{lem:connection}, this is equal to $Q_{\emptyset, \unexplored_\maxp^{q} \cup \T_\maxp}^{\pi^q}(s, \ell, a)$.

    Since the output of the best-arm identification routine is the action $\pi_\maxp^q(s, \ell)$, we obtain the desired inequality
    for all $a \in A$ with probability $1 - \frac{p}{|S|^2L^2}$.

    Taking a union over all stages, steps, and states, we obtain that the probability of the sub-event is $1 - p/2$.
\end{enumerate}
\end{proof}

\expandedgamevalue*

\begin{proof}
We start by proving the first claim.

Since there always exists an optimal positional strategy for the expanded game (see \Cref{rem:positional}), we assume $\pi_\maxp$ and $\pi_\minp$ to be positional.

We prove the claim by an induction on $\ell$, i.e., we prove that for all states $s \in \S$, we have
\begin{align}\label{eq:induction}
V_{\emptyset, \unexplored_\maxp^f \cup \T_\maxp}^{\pi_\maxp, \pi_\minp^{f}}( s, \ell) - V_{\emptyset, \unexplored_\maxp^f \cup \T_\maxp}^{\pi^{f}}( s, \ell) \leq (L - l)\eerr.
\end{align}

\noindent{\emph{Induction Base $(\ell = L)$.}}
The base case $\ell = L$ is trivial as both values are either 0 or 1 depending on whether $s \in \unexplored^f \cup \T_\maxp$ and the strategies play no role.

\noindent{\emph{Induction Step $(\ell < L)$.}} We assume \Cref{eq:induction} for $\ell+1$. There are two cases. 

\emph{Case $(s, \ell)  \not\in \unexplored_\maxp^f \cup \T_\maxp$:} Recall that $\Prob(s, a)(s')$ denotes the probability of reaching a state $s'$ from a state $s$ by playing an action~$a$. First, we assume that $s \in \S_\maxp$. Therefore, we get
\begin{align*}\label{eq:inductive}
&V_{\emptyset,\unexplored_\maxp^f \cup \T_\maxp}^{\pi_\maxp, \pi_\minp^{f}}(s, \ell) - V_{\emptyset,\unexplored_\maxp^f \cup \T_\maxp}^{\pi^{f}}(s, \ell)\\
&\overset{(1)}= Q_{\emptyset,\unexplored_\maxp^f \cup \T_\maxp}^{\pi_\maxp, \pi_\minp^{f}}(s, \ell, \pi_\maxp(s, \ell)) - Q_{\emptyset,\unexplored_\maxp^f \cup \T_\maxp}^{\pi^{f}}(s, \ell, \pi_\maxp^{f}(s, \ell)) \\ 
&\overset{(2)}= \sum_{s' \in \S}\Prob(s, \pi_\maxp(s, \ell))(s')V_{\emptyset,\unexplored_\maxp^f \cup \T_\maxp}^{\pi_\maxp, \pi_\minp^{f}}(s', \ell+1)-Q_{\emptyset,\unexplored_\maxp^f \cup \T_\maxp}^{\pi^{f}}(s, \ell, \pi_\maxp^{f}(s, \ell))\\
&\overset{(3)}\leq \sum_{s' \in \S}\Prob(s, \pi_\maxp(s, \ell))(s')V_{\emptyset,\unexplored_\maxp^f \cup \T_\maxp}^{\pi^{f}}(s', \ell+1)-Q_{\emptyset,\unexplored_\maxp^f \cup \T_\maxp}^{\pi^{f}}(s, \ell, \pi_\maxp^{f}(s, \ell))\\
&\quad \quad+ (L - \ell-1)\eerr\\
&\overset{(4)}= Q_{\unexplored_\maxp^f \cup \T_\maxp}^{\pi^{f}}(s, \ell, \pi_\maxp(s, \ell))-Q_{\unexplored_\maxp^f \cup \T_\maxp}^{\pi^{f}}(s, \ell, \pi_\maxp^{f}(s, \ell)) + (L - \ell-1)\eerr\\
&\overset{(5)}\leq (L - \ell)\eerr
\end{align*}
where $(1), (2)$ and $(4)$ follow from \Cref{def:qvalue}, $(3)$ follows from the inductive assumption and $(5)$ follows from the event $E$ part $(c)$.
If $s \in S_\minp$, applying the same steps gives us even tighter bound $(L - l -1)\eerr$.

\emph{Case $(s, \ell) \in \unexplored^f \cup \T_\maxp$:} This case is trivial due to \Cref{def:qvalue} and the fact that strategies play no role.

\noindent Altogether, by \Cref{def:qvalue}, we have
\[
    V_{\emptyset,\unexplored_\maxp^f \cup \T_\maxp}^{\pi_\maxp, \pi_\minp^{f}} = \sum_{s \in \S}\mu(s)V_{\emptyset, \unexplored_\maxp^f \cup \T_\maxp}^{\pi_\maxp, \pi_\minp^{f}}(s, 1)\,,
\]
which proves the first claim.

We now prove the second claim similarly by induction on $\ell$:
\begin{align}
    V_{\T_\maxp, \unexplored_\minp^f \cup \T_\minp}^{\pi_\maxp^f, \pi_\minp}( s, \ell) - V_{\T_\maxp, \unexplored_\minp^f \cup \T_\minp}^{\pi^{f}}( s, \ell) \leq (L - l)\eerr.
\end{align}
\noindent{\emph{Induction based $(\ell = L)$}.} The base case $\ell = L$ is trivial. 

\noindent{\emph{Induction Step $(\ell < L)$}.} There are two cases. 

\emph{Case $(s, \ell) \not\in \unexplored_\minp^f \cup \T_\maxp \cup \T_\minp$:} We first assume $s \in \S_\minp$. Therefore, we get
\begin{align*}\label{eq:inductive}
&V_{\T_\maxp, \unexplored_\minp^f \cup \T_\minp}^{\pi_\maxp^f, \pi_\minp}(s, \ell) - V_{\T_\maxp, \unexplored_\minp^f \cup \T_\minp}^{\pi^{f}}(s, \ell)\\
&\overset{(1)}= Q_{\T_\maxp, \unexplored_\minp^f \cup \T_\minp}^{\pi_\maxp^f, \pi_\minp}(s, \ell, \pi_\minp(s, \ell)) - Q_{\T_\maxp, \unexplored_\minp^f \cup \T_\minp}^{\pi^{f}}(s, \ell, \pi_\minp^{f}(s, \ell)) \\ 
&\overset{(2)}= \sum_{s' \in \S}\Prob(s, \pi_\minp(s, \ell))(s')V_{\T_\maxp, \unexplored_\minp^f \cup \T_\minp}^{\pi_\minp, \pi_\minp^{f}}(s', \ell+1)-Q_{\T_\maxp, \unexplored_\minp^f \cup \T_\minp}^{\pi^{f}}(s, \ell, \pi_\minp^{f}(s, \ell))\\
&\overset{(3)}\leq \sum_{s' \in \S}\Prob(s, \pi_\minp(s, \ell))(s')V_{\T_\maxp, \unexplored_\minp^f \cup \T_\minp}^{\pi^{f}}(s', \ell+1)-Q_{\T_\maxp, \unexplored_\minp^f \cup \T_\minp}^{\pi^{f}}(s, \ell, \pi_\minp^{f}(s, \ell))\\
&\quad \quad + (L - \ell-1)\eerr\\
&\overset{(4)}= Q_{\T_\maxp, \unexplored_\minp^f \cup \T_\minp}^{\pi^{f}}(s, \ell, \pi_\minp(s, \ell))-Q_{\T_\maxp, \unexplored_\minp^f \cup \T_\minp}^{\pi^{f}}(s, \ell, \pi_\minp^{f}(s, \ell))+ (L - \ell-1)\eerr\\
&\overset{(5)}\leq (L - \ell)\eerr
\end{align*}
where $(1), (2)$ and $(4)$ follow from the \Cref{def:qvalue}, $(3)$ follows from the inductive assumption and $(5)$ follows from the event $E$ part~$(c)$. If $s \in S_\maxp$, applying the same steps gives us even tighter bound $(L - l -1)\eerr$.

\emph{Case $(s, \ell) \in \unexplored^f \cup \T_\maxp \cup \T_\minp$:} This is trivial.

\noindent Altogether, by \Cref{def:qvalue}, we have
\[
V_{\T_\maxp, \unexplored_\minp^f \cup \T_\minp}^{\pi_\maxp^f, \pi_\minp} = \sum_{s \in \S}\mu(s)V_{\T_\maxp, \unexplored_\minp^f \cup \T_\minp}^{\pi_\maxp^f, \pi_\minp}(s, 1)\,,
\]
which yields the result.

\end{proof}

\expandedvalue*

\begin{proof}
Observe that for all strategy profiles $\pi$, we have
\begin{align*}
    \PP_\mu^\pi \left ( \omega \in \Reach_L(\T) \right ) &= \PP_{\mu'}^\pi \left ( (s_0,a_0,\ldots,s_L) \in \safereach(\emptyset, \T_\maxp) \right)\\ 
    &= 1 - \PP_{\mu'}^\pi \left ( (s_0,a_0,\ldots,s_L) \in \safereach(\T_\maxp, \T_\minp \right)\,.
\end{align*}
Therefore, by \Cref{lem:connection} Item~$(1)$, we only need to prove the following statements. 
\begin{itemize}
    \item For all strategies $\pi_\maxp$ of the player $\maxp$, we have
    \[
        V_{\emptyset, \T_\maxp}^{\pi_\maxp, \pi_\minp^{f}} - V_{\emptyset, \T_\maxp}^{\pi^{f}} \leq \varepsilon\,;
    \]
    \item for all strategies $\pi_\minp$ of the player $\minp$, we have
    \[
        V_{\T_\maxp, \T_\minp}^{\pi_\maxp^f, \pi_\minp} - V_{\T_\maxp, \T_\minp}^{\pi^{f}} \leq \varepsilon\,.
    \]
\end{itemize}

For the first claim, we have
\begin{align*}
V_{\emptyset, \T_\maxp}^{\pi_\maxp, \pi_\minp^{f}} - V_{\emptyset, \T_\maxp}^{\pi^{f}} &\overset{(1)}\leq V_{\emptyset, \unexplored_\maxp^{f} \cup \T_\maxp}^{\pi_\maxp, \pi_\minp^{f}} - V_{\emptyset, \T_\maxp}^{\pi^{f}}\\
&\overset{(2)}= V_{\emptyset, \unexplored_\maxp^{f} \cup \T_\maxp}^{\pi_\maxp, \pi_\minp^{f}} - V_{\emptyset, \T_\maxp}^{\pi^{f}} - V_{\emptyset, \unexplored^f_\maxp}^{\pi^{f}} + V_{\emptyset, \unexplored^f_\maxp}^{\pi^{f}}\\
&\overset{(3)}\leq V_{\emptyset, \unexplored_\maxp^{f} \cup \T_\maxp}^{\pi_\maxp, \pi_\minp^{f}} - V_{\emptyset, \T_\maxp}^{\pi^{f}} - V_{\emptyset, \unexplored_\maxp^{f}}^{\pi^{f}} + 4|\S'|\eest \\
&\overset{(4)}\leq V_{\emptyset, \unexplored_\maxp^{f} \cup \T_\maxp}^{\pi_\maxp, \pi_\minp^{f}} - V_{\emptyset, \unexplored_\maxp^{f} \cup \T_\maxp}^{\pi^{f}} + 4|\S'|\eest\\
&\overset{(5)}\leq L\eerr + 4|\S'|\eest
\overset{(6)}\leq \eps,
\end{align*}
where $(1)$ follows from the fact that $\T_\maxp \subseteq U^f_\maxp \cup \T_\maxp$, $(2)$ follows from algebraic manipulation, $(3)$ follows from \Cref{lem:unexplored}, $(4)$ follows from $V_{\emptyset, \unexplored_\maxp^{f} \cup \T_\maxp}^{\pi^{f}} \le V_{\emptyset, \T_\maxp}^{\pi^{f}} + V_{\emptyset, \unexplored_\maxp^{f}}^{\pi^{f}}$, $(5)$ follows from \Cref{lem:constructed}, and $(6)$ follows from \Cref{def:alg-constants}. 

Similarly for the second claim, we have
\begin{align*}
V_{\T_\maxp, \T_\minp}^{\pi_\maxp^f, \pi_\minp} - V_{\T_\maxp, \T_\minp}^{\pi^{f}} &\overset{(1)}\leq V_{\T_\maxp, U_\minp^{f} \cup \T_\minp}^{\pi_\maxp^f, \pi_\minp} - V_{\T_\maxp, \T_\minp}^{\pi^{f}} \\
&\overset{(2)}\leq V_{\T_\maxp, U_\minp^{f} \cup \T_\minp}^{\pi_\maxp^f, \pi_\minp} - V_{\T_\maxp, \T_\minp}^{\pi^{f}} - V_{\T_\maxp, \unexplored^f_\minp}^{\pi^{f}} + V_{\T_\maxp, \unexplored^f_\minp}^{\pi^{f}}\\
&\overset{(3)}\leq V_{\T_\maxp, U_\minp^{f} \cup \T_\minp}^{\pi_\maxp^f, \pi_\minp} - V_{\T_\maxp, \T_\minp}^{\pi^{f}} - V_{\T_\maxp, \unexplored^f_\minp}^{\pi^{f}}  + 4|\S'|\eest \\
&\overset{(4)}\leq V_{\T_\maxp, U_\minp^{f} \cup \T_\minp}^{\pi_\maxp^f, \pi_\minp} - V_{\T_\maxp, U_\minp^f \cup \T_\minp}^{\pi^{f}} + 4|\S'|\eest \\
&\overset{(5)}\leq L\eerr + 4|\S'|\eest
\overset{(6)}\leq \eps\,,
\end{align*}
where $(1)$ follows from the fact that $\T_\minp \subseteq U^f_\minp \cup \T_\minp$, $(2)$ follows from algebraic manipulation, $(3)$ follows from \Cref{lem:unexplored}, $(4)$ follows from $V_{\T_\maxp, \unexplored_\minp^{f} \cup \T_\minp}^{\pi^{f}} \le V_{\T_\maxp, \T_\minp}^{\pi^{f}} + V_{\T_\maxp, \unexplored_\minp^{f}}^{\pi^{f}}$, $(5)$ follows from \Cref{lem:constructed}, and $(6)$ follows from \Cref{def:alg-constants}. 

%It is helpful to notice that $V_{\emptyset, U_\minp^f}^{\pi^f} \leq 4|\S'|\eest$ by \cref{lem:unexplored}; and $V_{A, B}^{\pi} + V_{\emptyset, C}^{\pi} \geq V_{A, B \cup C}^{\pi}$ where $A, B, C \subseteq \S'$ are pairwise disjoint.
\end{proof}

\section{Outlines of Algorithms}
\label{app:sec:algorithms}

\begin{algorithm}
\SetAlgoLined
\LinesNumbered
\SetKwFunction{FMain}{Best-Arm-Identification}
\SetKwProg{Fn}{Function}{:}{}
\KwData{$\A, \eps, p$}
%\Fn{\FMain{$\mathcal{A}, \delta, \varepsilon$}}{
$\A' = \A, \eps' = \eps/4, p' = p/2$\;
\While{$|\A'| > 1$} {
    For all $a \in \A'$, sample $a$ for $\frac{\log(3/p')}{(\eps'/2)^2}$ number of times and let $\bar a$ be its empirical value\;
    Let $m$ be the median value of $ \{\bar a ~|~ a \in \A'$\}\;
    $\A' \gets \A' \setminus \{a \in \A' ~|~ \bar a < m\}$\;
    $\eps' \gets \eps'\frac 34; p' \gets p'\frac 12$\;
}
\KwRet{$a \in \A'$}
%}

\caption{Best-Arm Identification Algorithm from \cite{DBLP:journals/jmlr/Even-DarMM06}[Section 3.2]}
\label{alg:bai}
\end{algorithm}

\end{document}